\documentclass[sigconf]{acmart}
\AtBeginDocument{%
  }
\usepackage{amsmath}
\usepackage{amsthm}
\usepackage{multirow}
\usepackage{pifont}
\usepackage{tcolorbox}
\usepackage{float}

\copyrightyear{2026}
\acmYear{2026}
\setcopyright{cc}
\setcctype{by}
\acmDOI{10.1145/3767308.3836469}
\acmConference[MM '26]{Proceedings of the 34th ACM International Conference on Multimedia}{November 10--14, 2026}{Rio de Janeiro, Brazil}
\acmBooktitle{Proceedings of the 34th ACM International Conference on Multimedia (MM '26), November 10--14, 2026, Rio de Janeiro, Brazil}
\acmISBN{979-8-4007-2213-4/2026/11}
\makeatletter
\pretocmd{\@mkauthors@iii}{\exhyphenpenalty=10000\relax}{}{}
\makeatother

\begin{document}

\title[DocPO: Advancing Document Policy Optimization via Tailored Step-Aware Rewards]{DocPO: Advancing Document Policy Optimization\texorpdfstring{\\}{ }via Tailored Step-Aware Rewards}

\author{Yunhao Wang}
\correspondingauthor
\authornote{Yunhao Wang and Binghong Wu contributed equally to this work.}
\orcid{0009-0006-2313-2884}
\email{luciuswang@tencent.com}
\affiliation{%
  \institution{Tencent Hunyuan}
  \city{Beijing}
  \country{China}
}

\author{Binghong Wu}
\correspondingauthor
\authornotemark[1]
\orcid{0000-0002-3361-2260}
\email{bing-hong.wu@foxmail.com}
\affiliation{%
  \institution{Tencent Hunyuan}
  \city{Shanghai}
  \country{China}
}

\author{Zhenyu Huang}
\orcid{0009-0009-0345-5039}
\email{zhenyuhuang@tencent.com}
\affiliation{%
  \institution{Tencent Hunyuan}
  \city{Shenzhen}
  \country{China}
}

\author{Jiacheng Shi}
\orcid{0009-0002-5122-8614}
\email{shijiacheng21@mails.ucas.ac.cn}
\affiliation{%
  \institution{Tencent Hunyuan}
  \city{Shenzhen}
  \country{China}
}

\author{Shuo Huang}
\orcid{0009-0006-0166-3668}
\email{sanwushuosi@163.com}
\affiliation{%
  \institution{Tencent Hunyuan}
  \city{Shenzhen}
  \country{China}
}

\author{Tinghao Yu}
\orcid{0009-0004-7228-0192}
\email{maxwellyu@tencent.com}
\affiliation{%
  \institution{Tencent Hunyuan}
  \city{Beijing}
  \country{China}
}

\author{Feng Zhang}
\orcid{0009-0005-3591-7462}
\email{jayzhang@tencent.com}
\affiliation{%
  \institution{Tencent Hunyuan}
  \city{Beijing}
  \country{China}
}

\renewcommand{\shortauthors}{Wang et al.}

\begin{abstract}
Reinforcement learning (RL) for document parsing often relies on reference-based rewards rooted in edit distance (e.g., tree edit distance), yet it remains hard to optimize in the high-accuracy regime because such rewards become weakly discriminative: near-correct outputs receive very similar scores, providing limited learning signal for hard cases. We propose Step-Aware Annealing (SAA), a plug-and-play reward sharpening mechanism that progressively increases reward curvature during training, amplifying subtle quality differences among high-scoring samples while preserving stability in early learning. Built on SAA, we introduce DocPO, a document policy optimization framework with element-specific, reference-based rewards anchored by edit-distance signals: normalized string edit distance (NED) for text, tree edit distance similarity (TEDS) for tables, and a hybrid Rubric+edit reward for formulas. Experiments on OmniDocBench and DocElemHard show that SAA consistently improves GRPO-style RL across document elements over non-annealed rewards, without requiring additional human supervision for reward construction.
\end{abstract}

\begin{CCSXML}
<ccs2012>
 <concept>
  <concept_id>10010147.10010257.10010293.10010294</concept_id>
  <concept_desc>Computing methodologies~Machine learning</concept_desc>
  <concept_significance>500</concept_significance>
 </concept>
 <concept>
  <concept_id>10010147.10010257.10010282.10010284</concept_id>
  <concept_desc>Computing methodologies~Computer vision tasks</concept_desc>
  <concept_significance>300</concept_significance>
 </concept>
 <concept>
  <concept_id>10002951.10003317.10003371.10003386</concept_id>
  <concept_desc>Information systems~Extraction, transformation and loading</concept_desc>
  <concept_significance>100</concept_significance>
 </concept>
</ccs2012>
\end{CCSXML}

\ccsdesc[500]{Computing methodologies~Machine learning}
\ccsdesc[300]{Computing methodologies~Computer vision tasks}
\ccsdesc[100]{Information systems~Extraction, transformation and loading}

\keywords{Document Parsing, Reinforcement Learning, Reward Engineering, Multimodal}

\maketitle

\section{Introduction}

Document parsing serves as a foundational task in both Vision Language Models (VLMs) and Document AI~\citep{cui2021document}. Its core objective is to precisely decode heterogeneous elements arranged in 2D document layouts, such as text, formula, and table, into 1D sequences. Crucially, this task transcends vanilla Optical Character Recognition (OCR) by necessitating a deep understanding of element-specific characteristics. Specifically, accurately resolving details such as \textit{varied formula expressions} and \textit{tables with nested cells} places stringent demands on models' fine-grained perception and sequence generation capabilities.

\begin{figure*}[!t]
    \centering
    \begin{minipage}[t]{0.425\textwidth}
        \vspace{0pt}
        \centering
        \includegraphics[width=\linewidth]{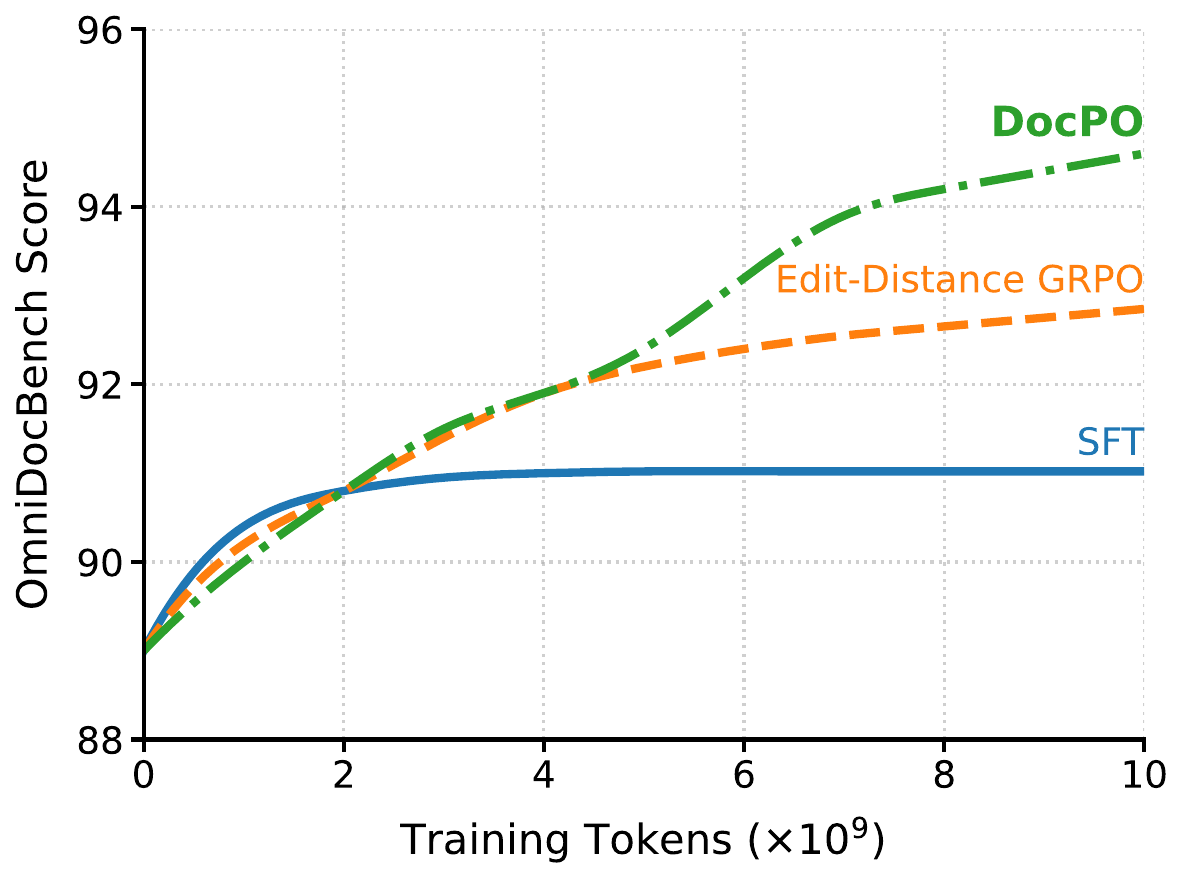}
        \\[-0.35em]
        {\small\textbf{(a) High-Accuracy RL Benefit}}
    \end{minipage}\hfill
    \begin{minipage}[t]{0.55\textwidth}
        \vspace{0pt}
        \centering
        \includegraphics[width=\linewidth]{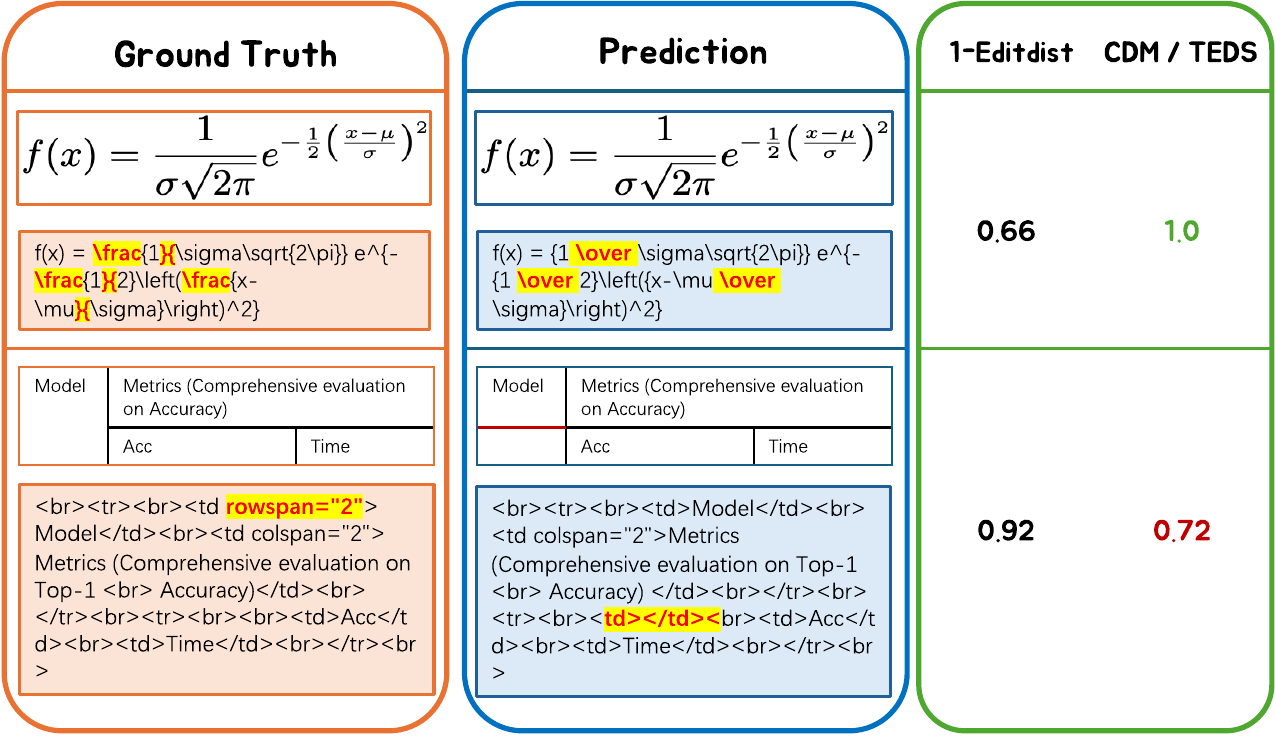}
        \\[-0.35em]
        {\small\textbf{(b) Failure Cases of String-Level Edit Distance}}
    \end{minipage}
    \caption{Two complementary views motivating DocPO. Left: Step-Aware Annealing improves optimization in the high-accuracy regime on OmniDocBench when applied to simple reference-based rewards. Right: string-level edit distance can mis-score outputs by penalizing semantically equivalent formulas and rewarding structurally broken tables with high lexical overlap.}
    \Description{A two-panel figure. The left panel is a training-curve comparison showing that Step-Aware Annealing improves optimization in the high-accuracy regime. The right panel shows two cases where string-level edit distance mis-scores document outputs: a semantically correct formula is penalized for lexical variation, and a structurally broken table is rewarded for lexical overlap.}
    \label{fig:teaser}
\end{figure*}

While VLMs have made remarkable progress in this field, current mainstream approaches still predominantly rely on the Supervised Fine-Tuning (SFT) paradigm~\citep{kim2022donut,blecher2023nougat}. Although SFT has achieved significant success driven by massive training data, the Teacher Forcing training mode can lead to Exposure Bias~\citep{zhang2025monkeyocr}. In response to these challenges, researchers have begun exploring Reinforcement Learning (RL), seeking improvements via sequence-level optimization. Yet, in stark contrast to its success in standard Large Language Models (LLMs)~\citep{hurst2024gpt,kumar2024training,guo2025deepseek}, RL's gains in OCR tasks remain relatively scarce. In our view, this discrepancy stems primarily from the misalignment between generic RL methods and the intrinsic characteristics of document elements.

Existing RL methods applied to document parsing~\citep{zhang2025monkeyocr,poznanski2025olmocr2,wang2025infinity} include two reward designs relevant to our setting. The first incorporates \textit{string-level} edit distance~\citep{wang2025infinity}, often treating heterogeneous elements uniformly with character-matching rewards. While this is a reasonable reference-based reward for plain text, it can misalign with rendered quality for formulas and with structural correctness for tables. As detailed in Figure~\ref{fig:teaser}(b), string-level edit distance generates false negatives for formulas by penalizing valid synonymous LaTeX variations (top), while yielding false positives for tables by rewarding high lexical overlap despite structural corruption (bottom). The second involves a learned \textit{render-and-compare} reward model~\citep{zhang2025monkeyocr}. Although this avoids manual HTML annotations, training an auxiliary reward model requires additional data and compute and is tailored to table optimization. Therefore, the pivotal question becomes: \textit{How can we retain simple element-appropriate reference-based rewards, make them sufficiently discriminative for RL, and avoid auxiliary reward-model training?}

We address this challenge with \textbf{Step-Aware Annealing} (SAA), a reward sharpening mechanism that can be applied to any normalized reference-based reward. Building on SAA, we introduce \textbf{DocPO}, a document policy optimization framework for document parsing. Our starting point is that practical document parsing already admits element-specific similarity measures in the \textit{edit-distance family}: normalized string edit distance (NED) for text, tree edit distance similarity (TEDS) for tables, and a \textit{Rubric+edit} reward for formulas. However, these base rewards become \textit{weakly discriminative} as the model approaches high accuracy: near-correct candidates receive very similar scores, which yields small advantages and slow convergence on hard cases. SAA addresses this issue through a training-step-dependent curvature schedule that progressively \textit{sharpens} the base reward while preserving the original ranking, making GRPO-style RL more effective without requiring additional human supervision for reward construction. In DocPO, the base reward remains element-specific, while SAA provides a unified optimization mechanism across text, tables, and formulas.

Notably, our focus is orthogonal to a dominant trend in recent specialized document VLMs: improving performance through modifications to the visual front-end, such as scaling dedicated high-resolution document encoders, redesigning encoders for aggressive vision-token compression, or introducing dynamic-resolution visual encoders with additional multimodal pretraining~\citep{li2025dotsocr,wei2025deepseek,cui2025paddleocrvl}. Such customization can be effective, but it may require extra adaptation and may reduce deployment simplicity. In contrast, DocPO keeps the standard Qwen2.5-VL-3B backbone unchanged, uses no extra pre-processing or post-processing modules, and improves performance solely through reward-level optimization.

We list our main contributions as follows:
\begin{itemize}
    \item \textbf{Step-Aware Annealing}: We propose a plug-and-play reward sharpening mechanism that anneals reward curvature over training steps, improving optimization in the high-accuracy regime under edit-distance-grounded reference-based rewards.
    
    \item \textbf{DocPO Framework}: We build DocPO as a document RL framework with task-tailored, reference-based rewards for text (NED), tables (TEDS), and formulas (syntax-gated \textit{Rubric\allowbreak{}+edit}).
    
    \item \textbf{Fine-Grained Element Benchmark}: We construct a more challenging fine-grained dataset covering text blocks, tables, and formulas for element-level document parsing.
\end{itemize}

\begin{figure*}[t]
    \centering
    \includegraphics[width=1\textwidth]{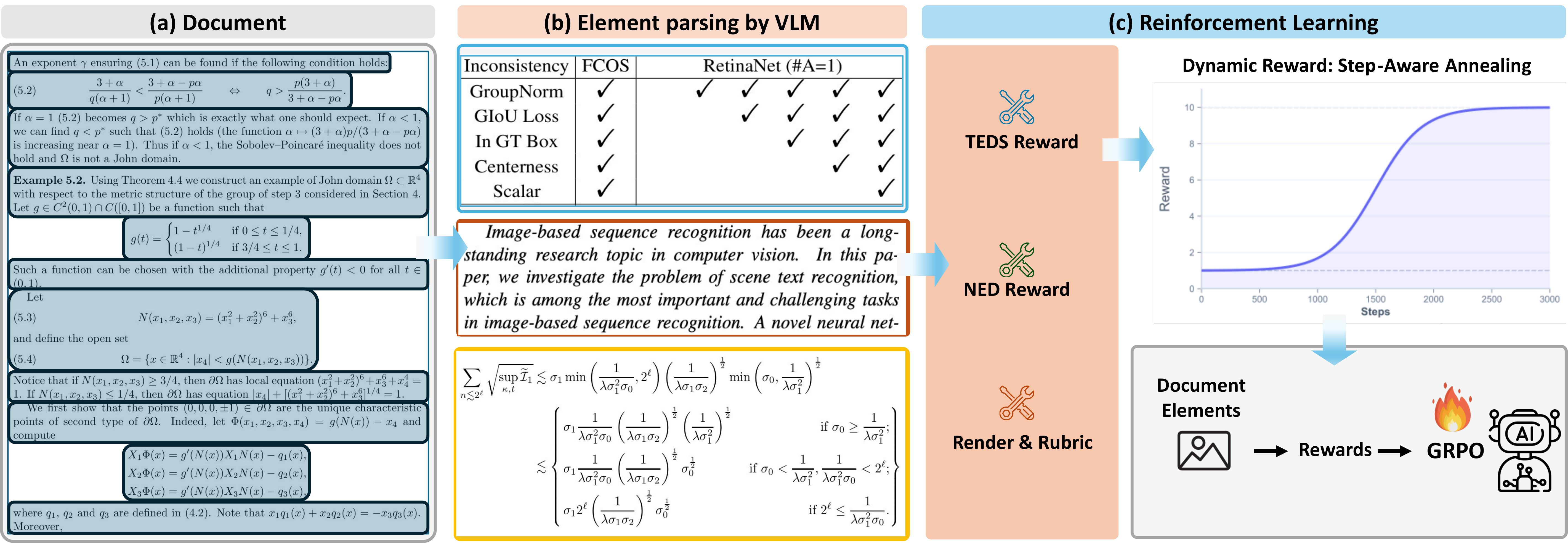} 
    \caption{DocPO pipeline. Element-specific rewards are normalized to $[0,1]$ and sharpened by Step-Aware Annealing during RL.}
    \Description{An overview of the DocPO pipeline. Document elements are parsed with element-specific base rewards, normalized to a shared scale, and then passed through the shared Step-Aware Annealing module during reinforcement learning.}
    \label{fig:workflow}
\end{figure*}

\begin{figure}[t]
    \centering
    \includegraphics[width=1\columnwidth]{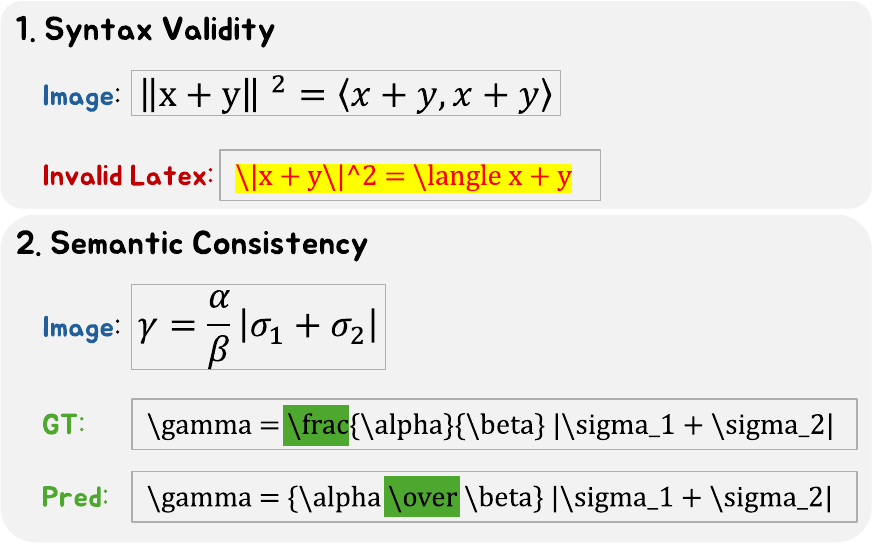}
    \caption{Examples of LaTeX evaluation challenges. The \textit{top case} demonstrates a \textbf{Syntax Validity} failure where the generated sequence is incomplete. The \textit{bottom case} highlights the need for \textbf{Semantic Consistency}, where the model predicts a valid alternative syntax (\texttt{\textbackslash over}) that differs lexically from the ground truth (\texttt{\textbackslash frac}) but remains mathematically correct.}
    \Description{Two formula examples illustrating evaluation challenges. One example shows an invalid incomplete formula, and the other shows two different LaTeX expressions that are semantically equivalent.}
    \label{fig:formula_case}
\end{figure}

\begin{table*}[htbp]
  \centering
    \begin{tabular}{lcccc}
      \toprule
      \textbf{Model} & \textbf{Method} & \textbf{Metric \& Strategy} & \textbf{Reward Type} & \textbf{Granularity} \\
      \midrule
      MonkeyOCR v1.5 & RL + RM & Render-and-compare & Continuous & Table \\
      INFINITY Parser & RLVR & Edit + layout/order & Continuous & Page \\
      olmOCR 2 & RLVR & Unit tests & Binary & Page \\
      \midrule
      \textbf{DocPO (ours)} & RLVR & Reference-based rewards + SAA & Continuous & Text / Formula / Table \\
      \bottomrule
   \end{tabular}
  \caption{Comparison of existing paradigms and our method. This motivates simple reference-based rewards together with Step-Aware Annealing, without requiring additional human supervision for reward construction.}
  \label{tab:model-comparison}
\end{table*}

\section{Related Work}

\subsection{Document Parsing Paradigms}
Existing methodologies in document parsing can be broadly categorized into three paradigms: pipelines, end-to-end VLMs, and modular VLMs.

\textbf{Pipelines}: Systems such as PaddleOCR~\citep{cui2025paddleocr}, MinerU~\citep{wang2024mineru}, and MonkeyOCR~\citep{li2025monkeyocr} typically involve a sequential workflow: detecting layout regions, extracting content via OCR, and linearizing results based on spatial layout~\citep{wang2021layoutreader,ha1995recursive}. The primary advantages of this paradigm are its efficiency and flexibility, often achieving a strong trade-off between performance and precision.

\textbf{End-to-End VLMs}: In contrast, end-to-end VLMs employ a single model to directly transcribe all document content into a linear sequence, as exemplified by Nougat~\citep{blecher2023nougat}, Kosmos-2.5~\citep{lv2023kosmos}, Qwen2.5-VL~\citep{bai2025qwen2}, olmOCR~\citep{poznanski2025olmocr}, DeepSeek-OCR~\citep{wei2025deepseek}, and the lightweight HunyuanOCR series~\citep{team2025hunyuanocr,li2026hunyuanocr}. Within a broader text-centric evaluation landscape exemplified by OCRBench v2~\citep{fu2025ocrbench,xu2026more}, related VLMs also advance text-centric visual understanding~\citep{tang2024textsquare,zhao2024harmonizing,lu2025bounding}, unify visual table tasks~\citep{zhao2024tabpedia}, or enable adaptive scene-text recognition~\citep{zhao2024multi}. Characterized by its simplicity, this approach requires only one model to complete the task and demonstrates robust adaptation across diverse scenarios, including camera-captured images. However, autoregressive decoding can cause latency on long or dense documents.

\textbf{Modular VLMs}: To address limitations of the aforementioned paradigms, recent systems such as Dolphin and MinerU2.5 introduce a modular design~\citep{feng2025dolphin,niu2025mineru2}. This approach utilizes distinct functional modules within a single model framework, typically adopting a \textit{crop-then-parse} strategy. By enabling element-level parallel decoding, it achieves a balance between usability and efficiency.

\textbf{Visual Front-End Customization}: Beyond output formulation, several recent document VLMs improve performance by altering the visual front-end. dots.ocr trains a dedicated 1.2B high-resolution encoder from scratch for native document inputs~\citep{li2025dotsocr}; DeepSeek-OCR introduces DeepEncoder, which serially combines window attention, a 16$\times$ convolutional compressor, and global attention to keep the vision-token budget manageable under high-resolution input~\citep{wei2025deepseek}; PaddleOCR-VL adopts a NaViT-style dynamic-resolution encoder initialized from Keye-VL~\citep{yang2025keyevl} and further adapted through large-scale multimodal pretraining~\citep{cui2025paddleocrvl}. By contrast, our work leaves the general-purpose Qwen2.5-VL backbone unchanged and studies optimization at the reward level.

\subsection{Reinforcement Learning for Document Parsing}
Despite the architectural diversity of these paradigms, their training predominantly relies on SFT. Consequently, RL for document parsing remains in its nascent stages. Beyond document parsing, progressive hard-case mining provides a related precedent by adaptively emphasizing difficult samples during object-detector training~\citep{wu2021progressive}. We compare representative RL methodologies in Table~\ref{tab:model-comparison}, with specific limitations detailed below.

\textbf{Learned Reward Models}: Targeting complex table recognition, MonkeyOCR v1.5~\citep{zhang2025monkeyocr} eliminates reliance on human-annotated HTML ground truth. It adopts a \textit{render-and-compare} strategy, where generated HTML is rendered into an image and evaluated against the original document by a specifically trained reward model (RM). While this mechanism prioritizes structural visual consistency over absolute character matching, it requires a separate, data-intensive RM training stage and is confined to table optimization, overlooking text and formulas.

\textbf{Reference-Based Rewards}: Recent RLVR approaches such as INFINITY Parser~\citep{wang2025infinity} optimize a composite reward of normalized edit distance, paragraph-count accuracy, and reading-order preservation without training an additional reward model. This design provides simple and scalable reference-based signals, but its string-similarity component can still misalign with rendered formula quality and table structure; StrucTab instead decomposes table rewards into validity, structure, and content~\citep{li2026structab}. For instance, a missing LaTeX symbol may incur only a limited character-level penalty while producing a visibly incorrect rendering.

\textbf{Binary Rewards}: Unlike continuous metrics, olmOCR 2~\citep{poznanski2025olmocr2} uses binary unit tests as rewards. With large-scale synthetic documents, it reports strong OCR-benchmark results while providing a sparse reward signal compared with continuous element-level metrics.

\begin{figure}[t]
    \centering
    \includegraphics[width=1\columnwidth]{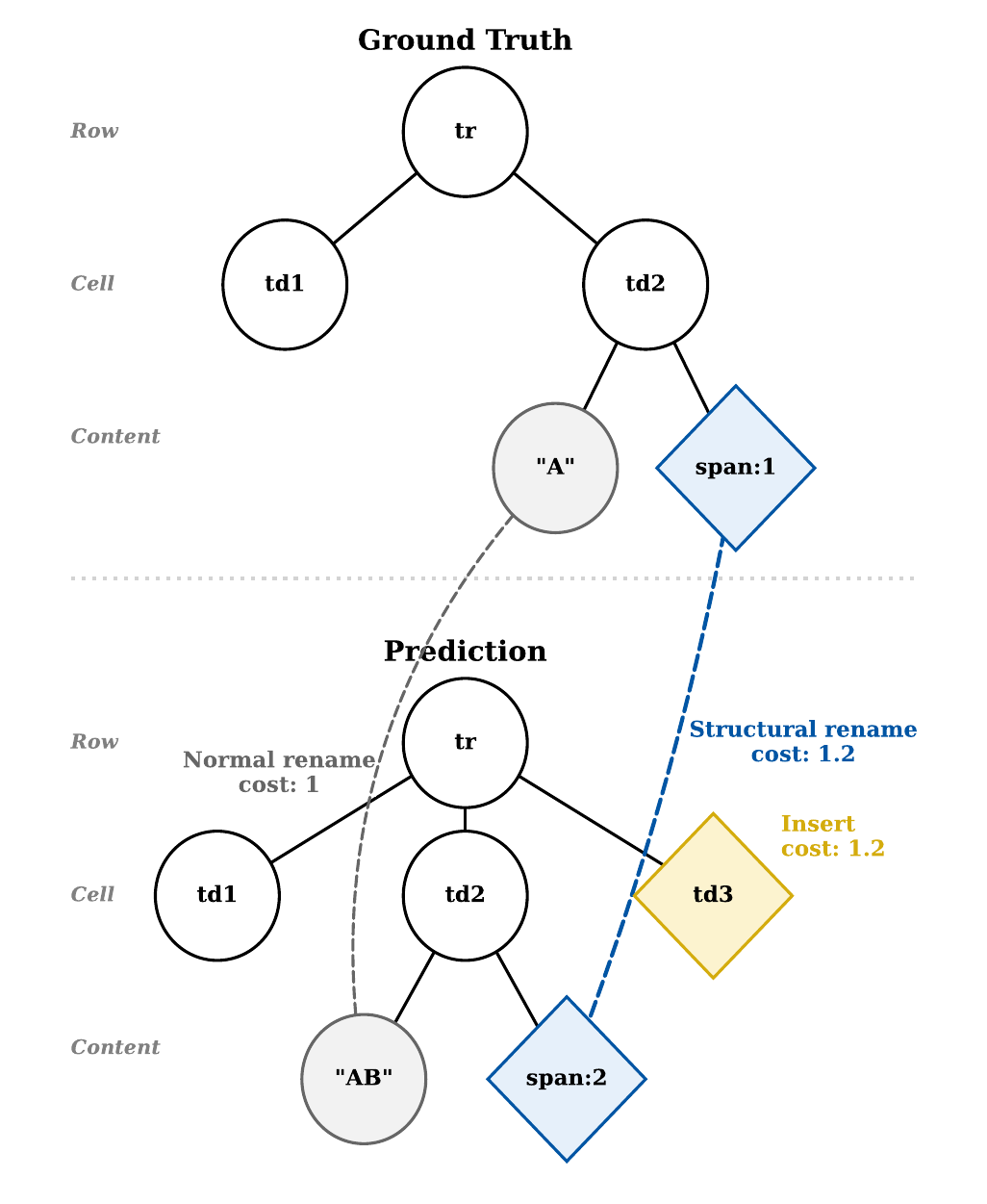}
    \caption{Illustration of weighted tree edit distance-based similarity. Distinct costs apply to structural renaming and insertion to evaluate structural and content consistency.}
    \Description{A diagram illustrating weighted tree edit distance for tables, with different edit operations receiving different costs to capture both structure and content fidelity.}
    \label{fig:tree_edit}
\end{figure}

\begin{figure}[t]
    \centering
    \includegraphics[width=1\columnwidth]{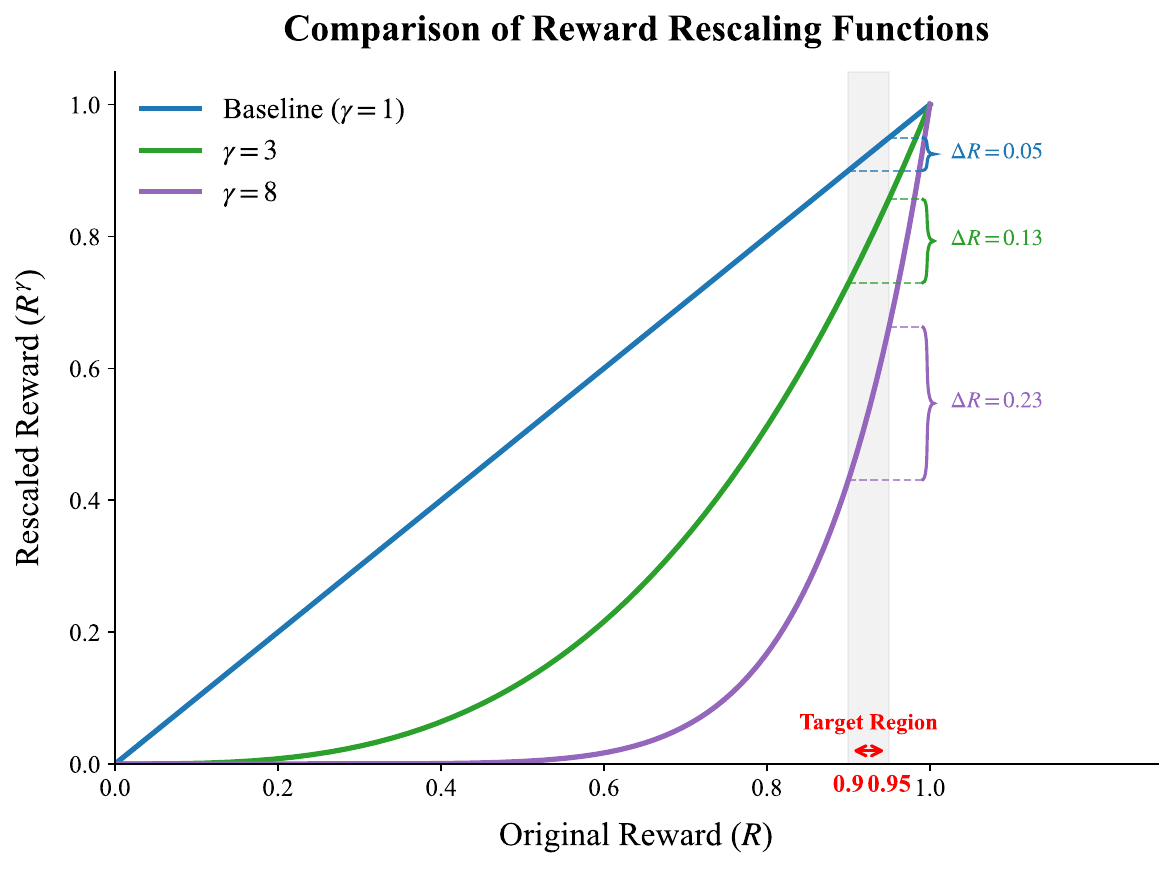}
    \caption{Comparison of reward rescaling functions with curvature factors ($\gamma$). Non-linear scaling (e.g., $\gamma=8$) amplifies the reward difference ($\Delta R$) in the high-score region, providing sharper optimization signals for fine-grained improvements.}
    \Description{A plot comparing reward rescaling curves under different curvature factors, showing stronger separation among high-score samples as the curvature increases.}
    \label{fig:annealing}
\end{figure}

\begin{figure}[t]
    \centering
    \includegraphics[width=1\columnwidth]{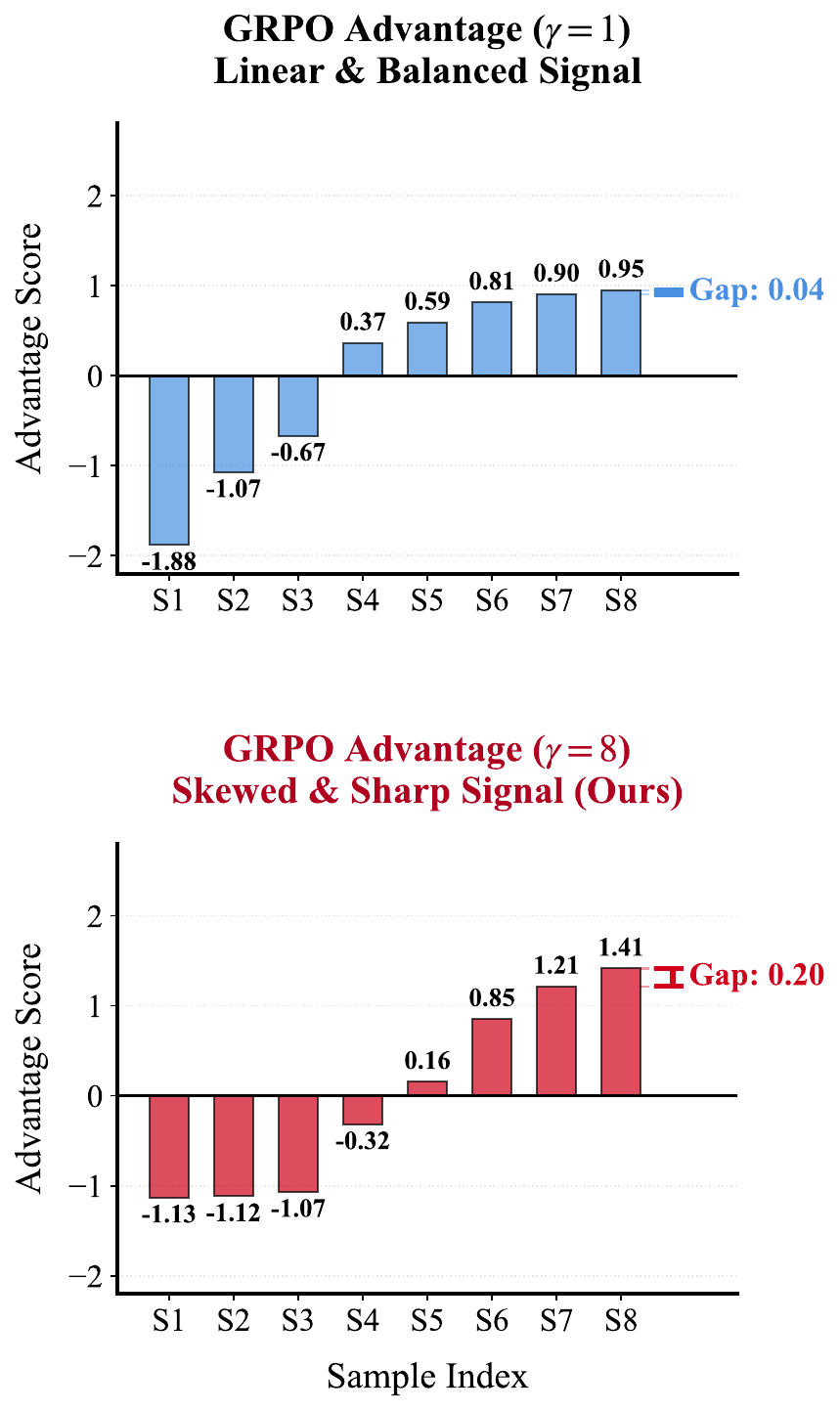}
    \caption{Comparison of advantage distributions for $\gamma=1$ (linear) and $\gamma=8$ (skewed). The higher $\gamma$ value sharpens the signal by increasing separation among the top-ranked samples, especially the top-1 and top-2 cases, rather than treating all non-top samples uniformly.}
    \Description{A comparison of advantage distributions under linear and sharpened reward scaling, showing that higher curvature increases discrimination among the highest-ranked samples, especially the top-1 and top-2 cases.}
    \label{fig:box_plot}
\end{figure}

\section{Base Reward Instantiation}
As illustrated in Figure~\ref{fig:workflow}, we use simple element-specific base rewards and map them to a shared $[0,1]$ scale before applying \textbf{Step-Aware Annealing} (Section~\ref{sec:step_aware_annealing}). These metrics reflect each element's native structure while retaining reference-based supervision, allowing the same annealing mechanism to operate without changing what each task considers correct. Throughout this section, $y$ and $y^*$ denote the prediction and reference.

\subsection{Base Reward for Table Recognition}
\label{sec:table_reward}

Tables are inherently hierarchical: an HTML table is a tree whose internal nodes encode structural semantics (row groupings, column spans) while leaf nodes carry cell content. A string-level comparison would ignore this hierarchy---two tables sharing most cell text but differing in a single \texttt{rowspan} can render as completely different layouts. We therefore adopt a \textit{weighted} tree edit distance similarity (TEDS) via the APTED algorithm~\citep{pawlik2016tree, pawlik2015efficient}, with differentiated costs (Figure~\ref{fig:tree_edit}): structural operations (node insertion/deletion, span renaming) receive a cost of 2, whereas content-only mismatches receive a cost of 1.

\begin{equation}
R_{\text{table}} = \mathrm{TEDS}_{\text{weighted}}(y, y^*) = 1 - \frac{\mathrm{EditDist}_{\text{weighted}}(y, y^*)}{\max\bigl(|y|, |y^*|\bigr)}
\end{equation}

\noindent This ensures that a single structural error (e.g., a missing \texttt{colspan}) incurs a heavier penalty than several character-level typos, aligning the reward with the perceptual importance of table layout.

\subsection{Base Reward for Text Recognition}
\label{sec:text_reward}

For text blocks, we adopt the complement of normalized edit distance (NED):
\begin{equation}
R_{\text{text}} = 1 - \mathrm{NED}(y, y^*) = 1 - \frac{\mathrm{EditDist}(y, y^*)}{\max\bigl(|y|, |y^*|\bigr)}
\end{equation}
This provides a lightweight, dense signal: scores near 1.0 indicate nearly perfect transcription. Character-level NED handles diverse scripts and punctuation without tokenization assumptions.

\subsection{Base Reward for Formula Recognition}
\label{sec:formula_reward}

Formulas present two co-existing challenges (Figure~\ref{fig:formula_case}): (1) an incomplete or malformed LaTeX sequence should receive zero credit regardless of partial overlap (\textit{syntactic validity}); (2) valid alternative notations (e.g., \texttt{\textbackslash frac\{a\}\{b\}} vs.\ \texttt{a \textbackslash over b}) should not be penalized (\textit{semantic equivalence}). We address both with a syntax-gated hybrid reward:
\begin{equation}
R_{\text{formula}} = v_{\text{syn}} \cdot \left( \alpha \cdot r_{\text{sem}} + \beta \cdot r_{\text{struct}} \right),
\quad r_{\text{struct}} = 1 - \mathrm{NED}(y, y^*)
\end{equation}
where $v_{\text{syn}} \in \{0, 1\}$ is a deterministic syntax gate (hard mask on compilation failure), $r_{\text{sem}} \in \{0,1\}$ is a binary rubric output for semantic equivalence, and $r_{\text{struct}}$ provides dense, continuous supervision via edit distance. We instantiate the rubric with Qwen2.5-7B-Instruct in a zero-shot setting. The weights satisfy $\alpha > \beta$ and $\alpha + \beta = 1$; we use $\alpha=0.8$ and $\beta=0.2$ in all experiments, prioritizing semantic correctness over literal form. The rubric captures meaning while NED remains dense and notation-sensitive, providing a practical signal. Prompt and reliability details are in the supplement.

\subsection{Step-Aware Annealing}
\label{sec:step_aware_annealing}

To address the challenge of weak reward discriminability in the high-accuracy regime, we propose \textbf{Step-Aware Annealing (SAA)}. SAA dynamically modulates reward curvature over \textit{training steps}, so that small differences among high-quality candidates are progressively amplified while early training remains stable. Crucially, SAA is agnostic to the particular reward definition once the base reward is normalized to $[0,1]$, which lets the same mechanism operate across text, tables, and formulas. In this section, we detail the non-linear reward shaping and the adaptive scheduling of $\gamma$.

\textbf{Non-Linear Reward Shaping}: At the core of our method lies a power-law transformation of the base metric (see Figure~\ref{fig:annealing}):

\begin{equation}
    f_{\gamma}(M) = M^{\gamma}
\end{equation}

\noindent where $M \in \{R_{\text{formula}}, R_{\text{table}}, R_{\text{text}}\}$ represents the base reward (all normalized to $[0,1]$). $\gamma$ is the curvature factor. When $\gamma>1$, the relative gap between two nearby scores is amplified: for $m\in(0,1)$ and small $\delta>0$, the ratio $(f_{\gamma}(m+\delta)-f_{\gamma}(m))/f_{\gamma}(m) = (1+\delta/m)^{\gamma}-1$ grows monotonically with $\gamma$, meaning near-ties become progressively easier to distinguish.

This separates \textit{what} is rewarded from \textit{how strongly} near-ties are distinguished: the base reward defines correctness, while $\gamma$ controls how aggressively those differences drive optimization. Formally, the relative gap $\rho_{\gamma}=(1+\delta/m)^{\gamma}-1$ is strictly increasing in $\gamma$ (see the supplement for a formal statement and proof), confirming that the sharpening effect strengthens monotonically. We anneal $\gamma$ from near-linear to more discriminative values over training.

\textbf{Interpretation}: For any base reward $M\in(0,1]$, we can rewrite $M^{\gamma}=\exp(\beta\log M)$ with inverse temperature $\beta=\gamma$ (equivalently, temperature $T=1/\gamma$). Increasing $\gamma$ therefore lowers the temperature, making the exponential weighting over $\log M$ more peaked and increasing discrimination among the highest-reward samples within each rollout group, especially the top-ranked candidates. This is analogous to an annealing schedule that transitions from soft (high-temperature) to sharp (low-temperature) selection (see the supplementary material for formal propositions on relative margin amplification and reward concentration).

\textbf{Adaptive Scheduling of $\gamma$}: To balance training stability with the sharpened discrimination power illustrated in Figure~\ref{fig:box_plot}, $\gamma$ is not fixed but adaptively updated based on runtime statistics. The update rule is defined as:

\begin{equation}
    \gamma = \gamma_{\text{init}} + \Delta_{\gamma} \cdot \left[ 1 - \exp\left( -\dfrac{s}{\tau_{\text{adaptive}}} \right) \right]
\end{equation}

Here, $\gamma_{\text{init}}$ sets the baseline amplification, $\Delta_{\gamma}$ controls the maximum adjustment range, and $s$ denotes the current training step.

In all experiments, we use a shared default setting of $\gamma_{\text{init}}=1$ and $\Delta_{\gamma}=8$. Empirically, $\gamma_{\text{init}}=1$ starts from the identity transform and keeps early updates aligned with the original base reward, while $\Delta_{\gamma}=8$ provides strong late-stage sharpening without destabilizing training. The supplement provides a sensitivity analysis over $\Delta_{\gamma}$.

To adapt the sharpening rate to runtime difficulty, we further introduce a task-wise Dynamic Dispersion Controller (DDC), which sets the adaptive time scale $\tau_{\text{adaptive}}$ using a task-wise dispersion score $d_{\text{DDC}}$ over recent rewards:
\begin{equation}
    \tau_{\text{adaptive}} = 
    \begin{cases} 
    \tau, & \text{if } s < s_{window} \\[5pt]
    \dfrac{\tau}{1 + d_{\text{DDC}}}, & \text{if } s \ge s_{window}
    \end{cases}
\end{equation}
where $s_{window}$ is the backtracking window size (set to 3). The controller defines $d_{\text{DDC}} = \frac{\sigma}{\mu + \epsilon}$ as a normalized dispersion score over the recent reward window. This score is instantiated with the rolling coefficient of variation, but it is used here as an internal control signal of DDC rather than as a standalone off-the-shelf module. In mixed-task RL, $d_{\text{DDC}}$ is computed separately within each task type (text, table, or formula), so DDC adapts the annealing rate using task-specific reward dispersion rather than a cross-task signal.

This piecewise design avoids noisy estimates early in training: in the early phase ($s < s_{window}$), a fixed $\tau$ prevents abnormal scaling due to limited data; in later phases, DDC responds to recent reward dispersion through $d_{\text{DDC}}$. Larger $d_{\text{DDC}}$ implies more heterogeneous reward outcomes and triggers faster sharpening, whereas smaller values keep the schedule closer to the default pace.

\section{Comparative Analysis}

We report experimental settings, benchmark results, and ablations isolating the contribution of SAA; additional training and filtering details appear in the supplement.

\subsection{Experimental Settings}\label{subsec:settings}

\begin{table}[t]
  \centering
  \caption{Task-level ablation on OmniDocBench and DocElemHard.}
  \label{tab:performance}
  \small
  \setlength{\tabcolsep}{4pt}
  \begin{tabular}{llcc}
    \toprule
    \textbf{Task} & \textbf{Method} & \textbf{OmniDocBench} & \textbf{DocElemHard} \\
    \midrule
    \multirow{3}{*}{Text $\downarrow$}
      & Baseline (w/o RL) & 0.0358 & 0.0910 \\
      & Edit-Dist Reward & 0.0238 & 0.0330 \\
      & Edit-Dist + SAA & \textbf{0.0125} & \textbf{0.0220} \\
    \midrule
    \multirow{5}{*}{Formula $\uparrow$}
      & Baseline (w/o RL) & 92.61 & 86.69 \\
      & Edit-Dist Reward & 92.86 & 85.89 \\
      & Rubric+edit Reward & 93.93 & 87.69 \\
      & \shortstack[l]{Rubric+edit\\(w/o syntax gate)} & 92.93 & 86.61 \\
      & Rubric+edit + SAA & \textbf{94.70} & \textbf{92.88} \\
    \midrule
    \multirow{4}{*}{Table $\uparrow$}
      & Baseline (w/o RL) & 89.30 & 83.20 \\
      & Edit-Dist Reward & 90.05 & 86.01 \\
      & APTED Reward & 91.70 & 87.21 \\
      & APTED + SAA & \textbf{93.01} & \textbf{90.60} \\
    \bottomrule
  \end{tabular}
\end{table}

\noindent\textit{Note:} For text, lower is better; for formula and table, higher is better. The extra Formula row isolates the effect of the syntax gate. Static-exponent variants for table recognition appear in Table~\ref{tab:table_static_gamma}. Table~\ref{tab:performance} is a task-level ablation; Table~\ref{tab:comparison} reports the final unified model.

\textbf{Datasets}: We constructed two distinct datasets to enhance parsing performance across different granularities. For the initial SFT stage, we utilize 490k full-page document samples with coarse-grained Mathpix annotations. For the subsequent fine-grained RL stage, we curated a high-precision dataset comprising 612k element patches. This RL set spans three structural categories: (1) \textit{RL-Tables} (206k samples), combining 86k manually annotated high-quality entries with filtered synthetic data; (2) \textit{RL-Formulas} (196k samples), sourced from open datasets and LaTeX rendering; and (3) \textit{RL-Text Blocks} (210k samples), incorporating open-source data and hard-case examples.

\textbf{Baseline Model}: Unless otherwise specified, the base model adopts Qwen2.5-VL-3B. For the scaling study in Table~\ref{tab:scaling_7b}, we also instantiate the same DocPO recipe on Qwen2.5-VL-7B. The shared SFT setup uses a maximum sequence length of 12k, global batch size of 512, constant learning rate of 3e-5, and 1 training epoch.

\textbf{RL Settings}: RL is performed on the baseline model with the following training parameters: input sequence length of 4k, output sequence length of 8k, global batch size of 128, constant learning rate of 1e-6, rollout number of 8, and no KL divergence constraint (prioritizing structural alignment performance over policy conservatism). Training concludes when the training reward plateaus.

\textbf{Training Strategy}: We train a unified model by mixing text, table, and formula patches. Each sample uses its base reward (NED/\allowbreak{}TEDS/\allowbreak{}\textit{Rubric+edit}) and is sharpened with Step-Aware Annealing.

\textbf{Evaluation}: Evaluation is performed on the OmniDocBench dataset~\citep{ouyang2025omnidocbench}, which contains 1,355 pages, and our self-constructed DocElemHard benchmark\footnote{\url{https://github.com/mohhao/DocPO}}, comprising 9,578 images. We employ three metrics to assess specific parsing modalities: Normalized Edit Distance (NED) for text, Character Detection Matching (CDM) for formulas, and Tree Edit Distance-based Similarity (TEDS) for tables. NED is reported on $[0,1]$ (lower is better), whereas CDM and TEDS are reported as percentages (higher is better). To provide a unified performance assessment, we calculate an overall metric defined as:

\begin{equation}
    \text{Overall} = \frac{(1 - \text{NED}) \times 100 + \text{TEDS} + \text{CDM}}{3}
    \label{eq:overall_score}
\end{equation}

More benchmark results appear in the supplementary material.

\begin{table*}[t]
\centering
\caption{Element evaluation on OmniDocBench and DocElemHard.}
\label{tab:comparison}
\small
\setlength{\tabcolsep}{3.6pt}
\resizebox{\textwidth}{!}{%
\begin{tabular}{lllllccccc|cccc}
\toprule
\multirow{2}{*}{\textbf{Type}} & \multirow{2}{*}{\textbf{Model}} & \multirow{2}{*}{\textbf{Post-proc.}} & \multirow{2}{*}{\textbf{RL}} & \multirow{2}{*}{\textbf{ViT Mod.}} & \multirow{2}{*}{\textbf{Size}} & \multicolumn{4}{c}{\textbf{OmniDocBench}} & \multicolumn{4}{c}{\textbf{DocElemHard}} \\
\cmidrule(lr){7-10} \cmidrule(lr){11-14}
 & & & & & & Overall$\uparrow$ & Text$\downarrow$ & Formula$\uparrow$ & Table$\uparrow$ & Overall$\uparrow$ & Text$\downarrow$ & Formula$\uparrow$ & Table$\uparrow$ \\
\midrule
\multirow{2}{*}{\shortstack[c]{General\\VLMs}}
 & Qwen2.5-VL-72B & No & No & -- & 72B & 89.95 & 0.0424 & 87.47 & 86.64 & 80.25 & 0.034 & 63.89 & 80.27 \\
 & Qwen2.5-VL-3B & No & No & -- & 3B & 88.05 & 0.0792 & 87.27 & 84.85 & 81.16 & 0.096 & 75.19 & 77.90 \\
\midrule
\multirow{6}{*}{\shortstack[c]{Specialized\\VLMs}}
 & dots.ocr & No & No & Arch+Pretrain & 3B & 89.60 & 0.034 & 90.40 & 81.90 & 87.54 & 0.037 & 85.52 & 80.80 \\
 & DeepSeek-OCR & No & No & Arch+Pretrain & 3B & 90.63 & 0.0350 & 93.43 & 81.95 & 86.30 & 0.0491 & 88.23 & 75.58 \\
 & PaddleOCR-VL & Yes & No & Arch+Pretrain & 0.9B & \underline{94.87} & \underline{0.0142} & \underline{94.10} & \underline{91.95} & \underline{91.47} & \underline{0.033} & \underline{90.82} & \underline{86.90} \\
\cmidrule(lr){2-14}
 & INFINITY Parser & No & Yes & \ding{55} & 7B & 89.13 & 0.025 & 81.20 & 88.70 & 87.50 & 0.043 & 86.70 & 80.10 \\
 & olmOCR 2 & No & Yes & \ding{55} & 7B & 93.05 & 0.0233 & 93.73 & 87.76 & 89.12 & \underline{0.033} & 89.38 & 81.29 \\
\cmidrule(lr){2-14}
 & \textbf{DocPO (Ours)} & No & Yes & \ding{55} & 3B & \textbf{95.49} & \textbf{0.0125} & \textbf{94.70} & \textbf{93.01} & \textbf{93.76} & \textbf{0.022} & \textbf{92.88} & \textbf{90.60} \\
\bottomrule
\multicolumn{14}{l}{\footnotesize \textbf{ViT Mod.}: Arch=architecture modification, Pretrain=additional ViT pre-training, \ding{55}=none.}
\end{tabular}%
}%
\end{table*}

\begin{table}[t]
\centering
\caption{Static exponent baselines vs. dynamic SAA on table recognition (OmniDocBench TEDS $\uparrow$).}
\label{tab:table_static_gamma}
\small
\setlength{\tabcolsep}{6pt}
\begin{tabular}{lc}
\toprule
\textbf{Method} & \textbf{OmniDocBench} \\
\midrule
APTED + fixed $\gamma=2$ & 92.1 \\
APTED + fixed $\gamma=4$ & 91.9 \\
APTED + fixed $\gamma=8$ & 92.3 \\
APTED + SAA & \textbf{93.01} \\
\bottomrule
\end{tabular}
\end{table}

\begin{table}[t]
\centering
\caption{Effect of the Dynamic Dispersion Controller (DDC) in SAA on table recognition (OmniDocBench TEDS $\uparrow$).}
\label{tab:table_ddc_ablation}
\small
\setlength{\tabcolsep}{6pt}
\begin{tabular}{lc}
\toprule
\textbf{Method} & \textbf{OmniDocBench} \\
\midrule
APTED + SAA (w/o DDC) & 92.73 \\
APTED + SAA & \textbf{93.01} \\
\bottomrule
\end{tabular}
\end{table}

\begin{figure}[t]
    \centering
    \includegraphics[width=1\columnwidth]{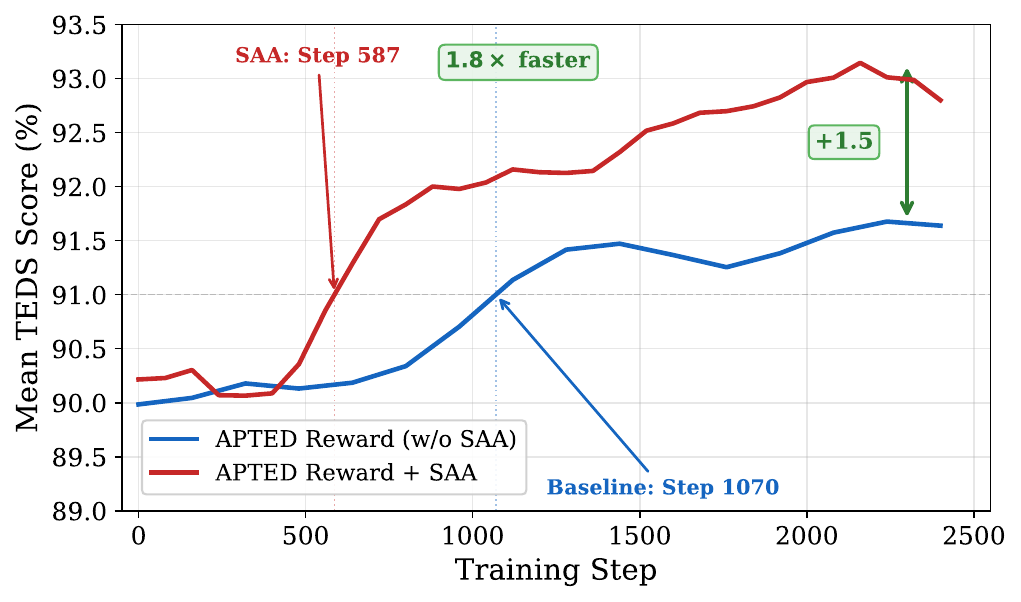}
    \caption{Evaluation TEDS score during training (table recognition). SAA reaches the 91.0 threshold $\sim$1.8$\times$ faster than the baseline, and maintains a +1.5 advantage at convergence.}
    \Description{Training curves comparing APTED reward with and without SAA on a held-out evaluation set. SAA reaches 91.0 TEDS at step 587 versus step 1070 for the baseline, and converges to approximately 93.0 versus 91.5.}
    \label{fig:training_curve}
\end{figure}

\begin{table}[t]
\centering
\caption{DocPO at different model scales. The same reward recipe transfers to 7B with modest, mixed changes relative to 3B.}
\label{tab:scaling_7b}
\small
\setlength{\tabcolsep}{3pt}
\begin{tabular}{lccc|ccc}
\toprule
 & \multicolumn{3}{c|}{\textbf{OmniDocBench}} & \multicolumn{3}{c}{\textbf{DocElemHard}} \\
\textbf{Scale} & Text$\downarrow$ & Formula$\uparrow$ & Table$\uparrow$ & Text$\downarrow$ & Formula$\uparrow$ & Table$\uparrow$ \\
\midrule
\shortstack[l]{Qwen2.5-VL-3B\\+ DocPO} & 0.0125 & 94.70 & \textbf{93.01} & 0.022 & \textbf{92.88} & 90.6 \\
\shortstack[l]{Qwen2.5-VL-7B\\+ DocPO} & \textbf{0.0120} & \textbf{94.94} & 92.83 & \textbf{0.019} & 92.00 & \textbf{90.8} \\
\bottomrule
\end{tabular}
\end{table}

\subsection{Ablation Study}

We conduct an ablation study with the primary goal of isolating the contribution of SAA. Specifically, we examine: (1) RL vs. SFT to quantify exploration benefits; (2) element-appropriate reference-based rewards and their key components; and (3) \textbf{Step-Aware Annealing} (SAA) vs. simpler non-annealed or fixed-curvature alternatives to measure the gain from progressive reward sharpening.

\textbf{Impact of RL Training (RL vs. SFT)}: 
Table~\ref{tab:performance} shows that our task-tailored RL approach improves over the SFT baseline across all three tasks. On OmniDocBench, text NED decreases from 0.0358 to 0.0238 with a plain edit-distance reward, and formula CDM improves from 92.61 to 92.86. The table task exhibits the same trend: moving from the SFT baseline (89.30) to RL with edit distance (90.05) already yields a clear gain, and replacing string-level matching with a structure-aware reward improves it further. This confirms that policy optimization can improve beyond static supervised training.

\textbf{Base Reward Instantiation and Component Ablations}: 
We compare simple reference-based rewards that match the native scoring criteria of each element: NED for text, APTED/TEDS for tables, and \textit{Rubric+edit} for formulas. The goal is not to claim a new reward family, but to establish strong and interpretable bases on top of which SAA can operate uniformly. The additional formula ablation shows that the syntax gate is necessary inside \textit{Rubric+edit}: removing it drops performance from 93.93 to 92.93 on OmniDocBench and from 87.69 to 86.61 on DocElemHard. This indicates that semantic consistency alone is insufficient when syntactically invalid LaTeX outputs are not explicitly suppressed.

\textbf{Effect of Step-Aware Annealing (SAA vs. Simpler Sharpening)}: 
Holding the underlying reward family fixed, we compare linear rewards, static power transforms, and the full Step-Aware Annealing schedule. For tables, Table~\ref{tab:table_static_gamma} shows that fixed exponents already improve over plain APTED, but none matches dynamic SAA: fixed $\gamma=2,4,8$ reach 92.1, 91.9, and 92.3 on OmniDocBench, all below the 93.01 achieved by SAA. Table~\ref{tab:table_ddc_ablation} further shows that removing the Dynamic Dispersion Controller (DDC) drops SAA from 93.01 to 92.73, indicating that the task-wise dispersion-aware schedule contributes additional gains beyond the step-aware schedule alone. Taken together, these results show that the improvement is not merely due to applying a larger constant nonlinearity; both the dynamic schedule and its DDC-based adaptation matter. The same pattern holds across other elements: for text, SAA reduces OmniDocBench NED from 0.0238 to \textbf{0.0125}, and for formulas, it improves \textit{Rubric+edit} from 93.93 to \textbf{94.70} on OmniDocBench and from 87.69 to \textbf{92.88} on DocElemHard. Overall, the benefit comes from sharper optimization under the same notion of correctness, rather than from changing the base reward itself.

\textbf{Training Efficiency}:
To further illustrate the effect of SAA on optimization dynamics, Figure~\ref{fig:training_curve} plots the mean TEDS score on a held-out evaluation set over training for both the APTED reward baseline and its SAA-enhanced variant. Both runs share the same base reward, training data, and hyperparameters; the only difference is the presence of SAA. As shown, the SAA variant crosses the 91.0 TEDS threshold at approximately step 587, whereas the baseline requires roughly 1,070 steps to reach the same level---a $\sim$1.8$\times$ speedup in convergence. Moreover, SAA maintains a consistent advantage throughout training, ultimately reaching $\sim$93.0\% while the baseline plateaus around $\sim$91.5\% (+1.5 points). This confirms that SAA not only accelerates learning by providing a stronger optimization signal, but also yields a higher final evaluation score.

\subsection{Evaluation Results on OmniDocBench and DocElemHard}

\textbf{Superior Performance without Post-Processing}:
Table~\ref{tab:comparison} compares our final unified model against leading general-purpose and specialized document VLMs on fine-grained element benchmarks. \textit{Crucially, our results are obtained directly from model outputs without model-specific post-processing.} Moreover, we emphasize a key architectural distinction: among the specialized baselines, dots.ocr scales the visual front-end with a 1.2B high-resolution encoder trained from scratch, DeepSeek-OCR introduces a custom DeepEncoder with window attention, convolutional compression, and global attention, and PaddleOCR-VL adopts a NaViT-style dynamic-resolution encoder initialized from Keye-VL~\citep{yang2025keyevl} with additional large-scale multimodal pretraining~\citep{cui2025paddleocrvl}. While these modifications are often well motivated, they may require extra adaptation and may reduce some deployment simplicity. In contrast, \textbf{our model uses the standard Qwen2.5-VL-3B backbone with no vision-encoder modification, no additional vision-encoder pretraining, and no post-processing}---the only change relative to the SFT baseline is the reward-level optimization introduced by DocPO. Under this simpler setup, our 3B model still surpasses all listed baselines. On the element-level OmniDocBench benchmark, our overall score reaches \textbf{95.49}. These results suggest that strong document parsing performance can also be obtained from a general-purpose backbone when the optimization signal is designed appropriately, rather than only through architectural customization. 

\textbf{State-of-the-Art on Complex Layouts}: The advantages of our approach are most pronounced in fine-grained element evaluation on challenging datasets. As shown in Table~\ref{tab:comparison}, our model achieves the lowest text edit distance on OmniDocBench (\textbf{0.0125}), surpassing all baselines including post-processed ones. Furthermore, on the challenging \textit{DocElemHard} benchmark, we secure the best \textbf{Overall} performance (\textbf{93.76}), outperforming the strongest listed specialized baseline, PaddleOCR-VL (91.47), with the best formula score (\textbf{92.88}). This superiority is evident in the demanding \textbf{Table} recognition task, where we score \textbf{90.6}, surpassing the nearest competitor (PaddleOCR-VL) by a substantial margin (+3.7). This demonstrates that our end-to-end method generalizes effectively to complex layouts without relying on model-specific post-processing.

\begin{table}[!t]
\centering
\caption{Subset-level table TEDS (\%) on OmniDocBench v1.5.}
\label{tab:table_drilldown}
\small
\setlength{\tabcolsep}{3pt}
\begin{tabular*}{\columnwidth}{@{\extracolsep{\fill}}llccc}
\toprule
\textbf{Property} & \textbf{Subset} & \textbf{DeepSeek-OCR} & \textbf{APTED} & \textbf{DocPO} \\
\midrule
Background & w/o bg & 83.4 & 93.2 & \textbf{93.9} \\
 & w/ bg & 78.7 & 88.9 & \textbf{90.2} \\
Equation & w/o eq. & 83.8 & 92.4 & \textbf{93.4} \\
 & w/ eq. & 72.8 & 89.4 & \textbf{89.7} \\
Language & English & 79.1 & 90.3 & \textbf{91.3} \\
 & En-Ch mixed & 93.0 & 94.1 & \textbf{94.6} \\
 & Chinese & 83.0 & 92.8 & \textbf{93.6} \\
Line style & fewer & 78.2 & 92.0 & \textbf{93.3} \\
 & full & 85.8 & 91.8 & \textbf{92.8} \\
 & less & 83.2 & \textbf{92.1} & 91.3 \\
 & no-line & 74.1 & 91.6 & \textbf{93.3} \\
Layout & horiz. & 82.8 & 92.1 & \textbf{93.0} \\
 & vert. & 7.3 & 75.0 & \textbf{78.6} \\
Cell span & no & 85.8 & 93.0 & \textbf{93.9} \\
 & yes & 73.3 & 89.3 & \textbf{90.3} \\
Struct. text & no & 82.9 & 92.7 & \textbf{93.01} \\
 & yes & 59.3 & 82.5 & \textbf{89.5} \\
\bottomrule
\end{tabular*}
\end{table}

\textbf{Fine-Grained Table Drill-Down}: Table~\ref{tab:table_drilldown} shows that DocPO beats DeepSeek-OCR on every subgroup and the non-annealed APTED baseline on 16 of 17. Its largest gains are on vertical layouts (+3.6 over APTED and +71.3 over DeepSeek-OCR), span-containing tables (+1.0 and +17.0), and structured-text tables (+7.0 and +30.2). Across line styles, DocPO is strongest on no-line and fewer-line tables; APTED is slightly better only on the less-line subset. The largest margins should be interpreted cautiously because the vertical and structured-text subsets are small ($n=6$ and $n=15$). Together with supplementary formula and text analyses, these results suggest that SAA helps most when difficult layouts require coordinated structure and content.

\subsection{Conclusion}

In this work, we introduced \textbf{Step-Aware Annealing} (SAA), a reward-sharpening mechanism instantiated within \textbf{DocPO}. By progressively sharpening fixed reference-based rewards, SAA improves text, table, and formula optimization without additional human supervision for reward construction. Using a general-purpose VLM without vision-encoder modifications, additional vision-encoder pretraining, or post-processing, DocPO achieves state-of-the-art results against systems with specialized architectures. Results on OmniDocBench and DocElemHard show that reward-level optimization complements model scaling and redesign, making SAA a practical mechanism for high-precision document understanding.

\section*{Limitations}

DocPO has three limitations. First, RL training is more expensive than SFT because of multiple rollouts, structural rewards, and formula rubric scoring; this offline cost does not affect inference. Second, proxy rewards can be gamed or miss subtle semantic nuances. Finally, we study only text, formulas, and tables; extending SAA to charts and geometric diagrams remains future work.

\clearpage
\balance

\clearpage
\appendix
\section*{Supplementary Material}
\balance

\section{Prompts for Document Element Parsing}
\label{sec:appendix_parsing_prompts}

The exact prompts used for text, formula, and table parsing during inference are as follows:

\begin{tcolorbox}[
    colback=gray!5, colframe=gray!50, arc=1mm, boxrule=0.5pt,
    left=2mm, right=2mm, top=2mm, bottom=2mm,
    fontupper=\small\ttfamily, halign=left, 
    before upper={\setlength{\parskip}{0.5em}} 
]
\textbf{\sffamily [Text Parsing]} \par
Parse the text block without using any \texttt{\$\$...\$\$}.\par

\textbf{\sffamily [Formula Parsing]} \par
Parse the formula with latex format.\par

\textbf{\sffamily [Table Parsing]} \par
Please convert this cropped image directly into html format of table.\par
\end{tcolorbox}

\section{Additional Analyses}
\label{sec:appendix_additional_analyses}

\subsection{Formula Rubric Prompt}
\label{sec:appendix_verifier_prompt}

Evaluating generated formulas requires distinguishing stylistic variations from semantic errors. We use the following prompt to instantiate the formula rubric:

\begin{tcolorbox}[
    colback=gray!5, colframe=gray!50, arc=1mm, boxrule=0.5pt,
    left=2mm, right=2mm, top=2mm, bottom=2mm,
    fontupper=\small\ttfamily, halign=left,
    before upper={\setlength{\parskip}{0.5em}}
]
Please determine whether \texttt{[Formula 1]} and \texttt{[Formula 2]} are semantically consistent. Ignore variations in representation (e.g., spacing or LaTeX command synonyms) and focus solely on semantic equivalence. \par

The evaluation must be strict, including identical variable names. If \texttt{[Formula~2]} contains abnormal repetitions, it should be deemed \textbf{Inconsistent}. \par

\textbf{\sffamily [Input Format]} \par
\texttt{[Formula 1]} = """\{gt\}""" \par
\texttt{[Formula 2]} = """\{pred\}""" \par
\par
\textbf{\sffamily [Output]} \par
Respond only with \textbf{Consistent} or \textbf{Inconsistent}.
\end{tcolorbox}

\subsection{Rubric Reliability}
We instantiate a lightweight \textit{text-only} rubric for formulas with \textbf{Qwen2.5-7B-Instruct} in a \textit{zero-shot} setting, using the prompt above. To reduce training overhead, we cache the rubric output for each $(y^*, y)$ pair and reuse it across repeated rollouts.

We evaluate rubric reliability on 200 formula pairs randomly sampled from the test set. We manually annotate their semantic equivalence and compare the annotations with rubric outputs under the same prompt. The rubric achieves high accuracy, supporting its use as a lightweight semantic signal. This post-hoc study is used only for analysis, not for reward tuning, checkpoint selection, or model selection.

\begin{table}[htbp]
\centering
\caption{Accuracy of the formula rubric on 200 randomly sampled formula pairs.}
\label{tab:llm_judge_acc}
\small
\begin{tabular}{lc}
\toprule
\textbf{Rubric} & \textbf{Accuracy} \\
\midrule
Qwen2.5-7B-Instruct (zero-shot) & 97.5\% \\
\bottomrule
\end{tabular}
\end{table}

\subsection{Comparison with Peer RL Parsers}
\label{sec:supp_peer_rl}
To isolate the optimization method from full-page pipeline effects, we compare DocPO with peer RL parsers under the same element-level protocol. All models receive the same cropped patches and their official element-specific prompts, with no additional post-processing. Table~\ref{tab:supp_peer_rl} reports the three element metrics and the aggregate score defined in the main paper.

\begin{table*}[!t]
\centering
\caption{Comparison with peer RL parsers under the same element-level protocol. Text is measured by NED ($\downarrow$); formula, table, and overall scores are percentages ($\uparrow$).}
\label{tab:supp_peer_rl}
\small
\setlength{\tabcolsep}{3.6pt}
\resizebox{0.98\textwidth}{!}{%
\begin{tabular}{lllccccc||cccc}
\toprule
\multirow{2}{*}{\textbf{Model}} & \multirow{2}{*}{\textbf{RL method}} & \multirow{2}{*}{\textbf{Post-proc.}} & \multirow{2}{*}{\textbf{Size}} & \multicolumn{4}{c}{\textbf{OmniDocBench}} & \multicolumn{4}{c}{\textbf{DocElemHard}} \\
\cmidrule(lr){5-8} \cmidrule(lr){9-12}
 & & & & Overall$\uparrow$ & Text$\downarrow$ & Formula$\uparrow$ & Table$\uparrow$ & Overall$\uparrow$ & Text$\downarrow$ & Formula$\uparrow$ & Table$\uparrow$ \\
\midrule
INFINITY Parser & Edit-distance RL & \ding{55} & 7B & 89.13 & 0.0250 & 81.20 & 88.70 & 87.50 & 0.043 & 86.70 & 80.10 \\
olmOCR 2 & Binary unit-test reward & \ding{55} & 7B & 93.05 & 0.0233 & 93.73 & 87.76 & 89.12 & 0.033 & 89.38 & 81.29 \\
\midrule
\textbf{DocPO} & Element-specific + SAA & \ding{55} & 3B & \textbf{95.49} & \textbf{0.0125} & \textbf{94.70} & \textbf{93.01} & \textbf{93.76} & \textbf{0.022} & \textbf{92.88} & \textbf{90.60} \\
\bottomrule
\end{tabular}%
}
\end{table*}

Despite using a 3B rather than a 7B backbone, DocPO improves the OmniDocBench overall score by 2.44 points over olmOCR 2 and by 6.36 points over INFINITY Parser. The advantage is especially clear for tables: DocPO reaches 93.01 on OmniDocBench and 90.60 on DocElemHard. The corresponding scores are 87.76 and 81.29 for olmOCR 2, and 88.70 and 80.10 for INFINITY Parser. This matched comparison indicates that the gains are associated with element-specific continuous rewards and SAA rather than differences in input granularity or post-processing.

\subsection{Alternative Reward Shaping and Optimization Stability}
\label{sec:supp_alternative_shaping}
We compare SAA with three generic shaping alternatives using the same 80k-sample OmniDocBench setup. Let $M\in[0,1]$ be the base reward and $G$ the rollout-group size. Rank shaping maps the within-group rank to $(G-\operatorname{rank})/(G-1)$; margin shaping linearly rescales rewards above 0.8; and sigmoid shaping introduces a center and a slope $k$.

\begin{table}[H]
\centering
\caption{Reward-shaping alternatives on the 80k-sample OmniDocBench setup (aggregate score $\uparrow$).}
\label{tab:supp_reward_shaping}
\small
\setlength{\tabcolsep}{5pt}
\begin{tabular}{llc}
\toprule
\textbf{Shaping} & \textbf{Function $g(M)$} & \textbf{Score} \\
\midrule
Rank & $(G-\operatorname{rank})/(G-1)$ & 88.32 \\
Margin & $\max(0,M-0.8)/0.2$ & 90.53 \\
Sigmoid & $\sigma(k(M-0.8))$ & 91.05 \\
\textbf{SAA (DocPO)} & $M^{\gamma(s)}$ & \textbf{92.18} \\
\bottomrule
\end{tabular}
\end{table}

SAA exceeds the strongest generic alternative, sigmoid shaping, by 1.13 points, and improves over margin and rank shaping by 1.65 and 3.86 points. Thus, generic nonlinear shaping is useful, but SAA performs best among the tested choices without requiring a manually selected margin threshold or sigmoid center and slope. Unlike temperature schedules that rescale action logits, SAA directly reshapes the normalized task reward while preserving its ordering.

\begin{figure}[htbp]
\centering
\includegraphics[width=\columnwidth]{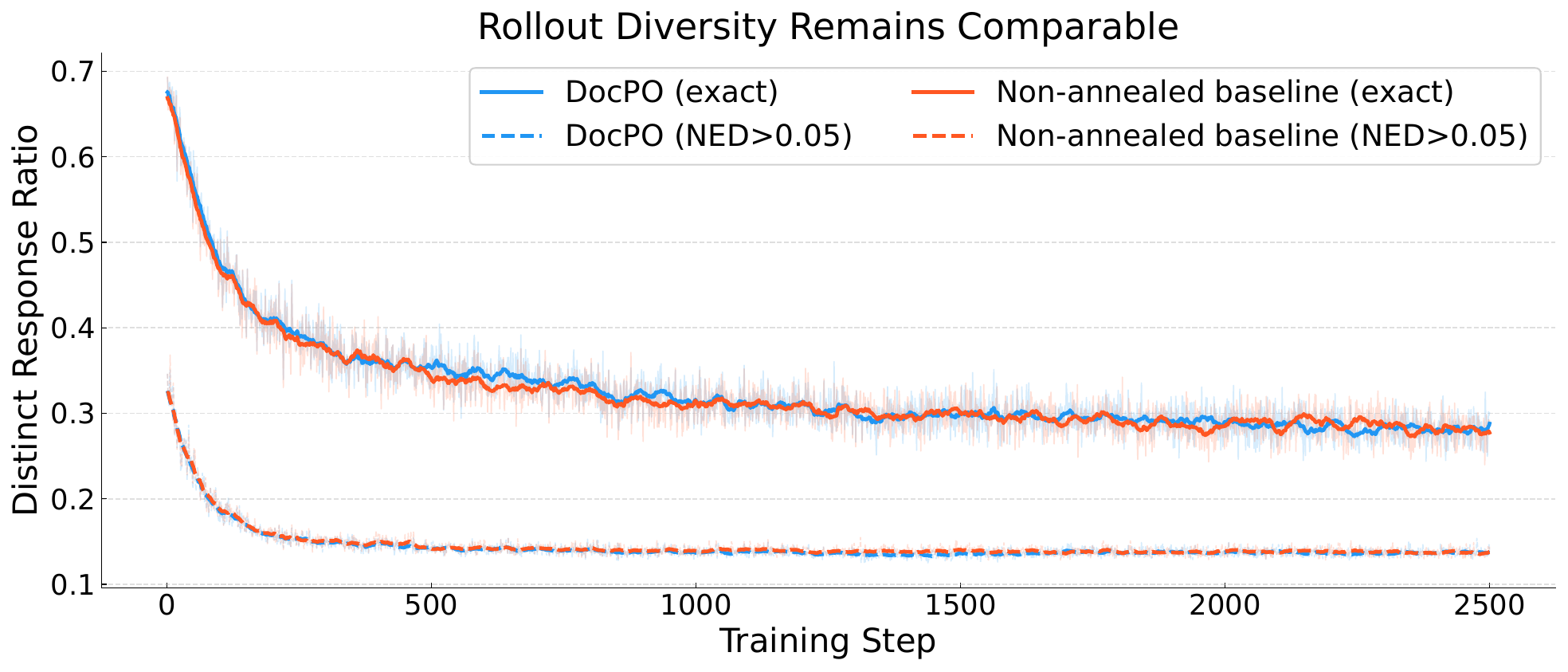}
\vspace{-4pt}
\includegraphics[width=\columnwidth]{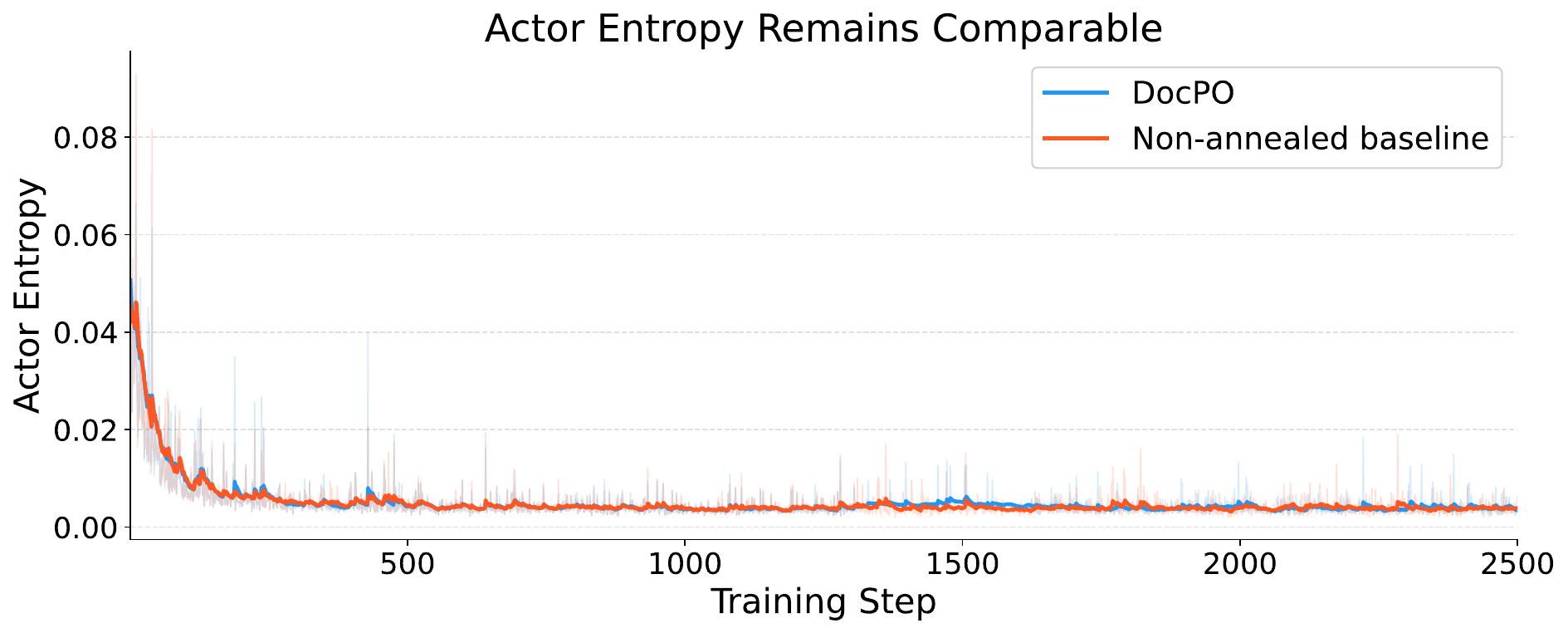}
\caption{Optimization-stability diagnostics for DocPO and the non-annealed baseline. Top: rollout diversity measured by distinct-response ratios under exact matching (solid) and NED $>0.05$ (dashed). Bottom: actor entropy. The curves remain closely aligned across training.}
\Description{Two training plots compare DocPO with a non-annealed baseline. In the top plot, exact-match and NED-threshold rollout-diversity curves nearly overlap for the two methods. In the bottom plot, actor-entropy curves also nearly overlap throughout training.}
\label{fig:supp_stability}
\end{figure}

Figure~\ref{fig:supp_stability} checks whether progressively sharper rewards cause premature policy collapse. DocPO and the non-annealed baseline follow comparable rollout-diversity and actor-entropy trajectories, with no additional collapse relative to the baseline evident in either diagnostic. Thus, the performance gain is not accompanied by a visible loss of exploration under these measurements.

\textbf{Training cost.} Using 32 H20 GPUs, a global batch size of 128, a 12k maximum sequence length, and 8 rollouts, standard GRPO requires 60.3 seconds per step versus 60.8 for DocPO (0.8\% overhead). SFT requires 10.8 seconds per step, confirming that rollout generation dominates the offline RL cost. SAA does not affect inference latency; structural reward computations are parallelized across CPUs to avoid a GPU-side bottleneck.

\subsection{Additional Benchmark Results}
\label{sec:supp_more_benchmarks}
We report additional results on two public benchmarks for table and formula recognition, comparing DeepSeek-OCR, the non-annealed DocPO variant (w/o SAA), and the full DocPO model.

\begin{table}[htbp]
\centering
\caption{Additional benchmark results on WikiTableSet~\citep{icpram23} and UniMERNet~\citep{wang2024unimernetuniversalnetworkrealworld}. Higher is better for all metrics.}
\label{tab:supp_additional_benchmarks}
\small
\setlength{\tabcolsep}{4pt}
\begin{tabular}{@{}lll ccc@{}}
\toprule
\textbf{Benchmark} & \textbf{Subset} & \textbf{Metric} & \textbf{DeepSeek-OCR} & \textbf{w/o SAA} & \textbf{DocPO} \\
\midrule
\multirow{2}{*}{WikiTableSet}
  & --  & TEDS           & 83.8 & 92.1 & \textbf{95.8} \\
  & --  & TEDS-S         & 88.6 & 97.2 & \textbf{98.7} \\
\addlinespace[3pt]
\multirow{2}{*}{UniMERNet}
  & CPE & CDM            & 78.2 & 88.7 & \textbf{95.3} \\
  & HWE & CDM            & 84.7 & 92.4 & \textbf{94.7} \\
\bottomrule
\end{tabular}
\end{table}

These results are consistent with the main-paper conclusions. On WikiTableSet, DocPO improves over DeepSeek-OCR by 12.0 TEDS and 10.1 TEDS-S, and exceeds the non-annealed variant by 3.7 and 1.5 points, respectively. On UniMERNet, DocPO improves formula recognition on both challenging subsets, outperforming DeepSeek-OCR by 17.1 CDM on CPE and 10.0 on HWE. It also retains an advantage over the version without SAA: 6.6 points on CPE and 2.3 on HWE. Overall, the pattern observed on OmniDocBench and DocElemHard carries over to these external benchmarks: the reward design and step-aware annealing are most beneficial on structurally difficult or handwriting-heavy cases.

\subsection{OmniDocBench v1.5 Drill-Down for Formulas and Text Blocks}
\label{sec:supp_omnidocbench_drilldown}
To complement the table drill-down in the main paper, we further analyze page-level subsets for formulas and text blocks on OmniDocBench v1.5. We report CDM ($\uparrow$) for formulas and normalized edit distance ($\downarrow$) for text blocks.

\begin{table}[htbp]
\centering
\caption{Page-level formula drill-down on OmniDocBench v1.5 (CDM $\uparrow$).}
\label{tab:supp_formula_drilldown}
\small
\setlength{\tabcolsep}{4pt}
\resizebox{\columnwidth}{!}{%
\begin{tabular}{llccc}
\toprule
\textbf{Property} & \textbf{Subset} & \textbf{DeepSeek-OCR} & \textbf{w/o SAA} & \textbf{DocPO} \\
\midrule
Overall & ALL & 91.4 & 91.9 & \textbf{93.5} \\
Visual noise & None & 92.7 & 93.7 & \textbf{94.4} \\
 & Colorful bg. & 87.8 & 87.4 & \textbf{92.9} \\
 & Fuzzy scan & 72.9 & 82.1 & \textbf{95.2} \\
 & Watermark & 73.7 & 79.1 & \textbf{88.0} \\
Language & English & 94.1 & 94.2 & \textbf{95.5} \\
 & Chinese & 83.6 & 85.3 & \textbf{87.8} \\
Layout & 1+ columns & 96.2 & 93.5 & \textbf{97.6} \\
 & Double column & 93.0 & 93.5 & \textbf{95.2} \\
 & Other layout & 79.7 & 86.1 & \textbf{88.1} \\
 & Single column & 90.3 & 91.1 & \textbf{91.7} \\
 & Three column & 93.8 & 99.2 & \textbf{99.3} \\
Source & Academic literature & 94.3 & 85.8 & \textbf{96.2} \\
 & Book & 88.6 & 89.0 & \textbf{90.0} \\
 & PPT2PDF & 89.3 & 91.3 & \textbf{91.7} \\
 & Colorful textbook & 93.4 & 96.8 & \textbf{96.9} \\
 & Exam paper & 93.8 & 94.3 & \textbf{96.0} \\
 & Note & 90.1 & \textbf{100.0} & \textbf{100.0} \\
\bottomrule
\end{tabular}%
}
\end{table}

DocPO improves over the non-annealed formula variant on nearly every reported subset, with the largest gains on visually difficult pages: 13.1 CDM on fuzzy scans, 8.9 on watermark pages, 10.4 on academic literature pages, and 5.5 on colorful backgrounds. Its margins over DeepSeek-OCR are larger on these hard slices, reaching 22.3 CDM on fuzzy scans and 14.3 on watermark pages. The only subset where DocPO does not exceed the non-annealed baseline is the small \textit{note} subset, where both methods are already saturated at 100.0.

\begin{table}[htbp]
\centering
\caption{Page-level text-block drill-down on OmniDocBench v1.5 (EditDist $\downarrow$).}
\label{tab:supp_text_drilldown}
\scriptsize
\setlength{\tabcolsep}{3pt}
\resizebox{\columnwidth}{!}{%
\begin{tabular}{llccc}
\toprule
\textbf{Property} & \textbf{Subset} & \textbf{DeepSeek-OCR} & \textbf{w/o SAA} & \textbf{DocPO} \\
\midrule
Overall & ALL & 0.0351 & 0.0238 & \textbf{0.0125} \\
Visual noise & None & 0.0180 & 0.0160 & \textbf{0.0078} \\
 & Colorful bg. & 0.0430 & 0.0280 & \textbf{0.0153} \\
 & Fuzzy scan & 0.1005 & 0.0549 & \textbf{0.0200} \\
 & Watermark & 0.0940 & 0.0775 & \textbf{0.0328} \\
Language & En-Ch mixed & 0.0559 & 0.0311 & \textbf{0.0171} \\
 & English & 0.0170 & 0.0130 & \textbf{0.0053} \\
 & Chinese & 0.0498 & 0.0333 & \textbf{0.0189} \\
Layout & 1+ columns & 0.0192 & 0.0165 & \textbf{0.0046} \\
 & Double column & 0.0362 & 0.0246 & \textbf{0.0128} \\
 & Other layout & 0.0414 & 0.0264 & \textbf{0.0144} \\
 & Single column & 0.0357 & 0.0243 & \textbf{0.0137} \\
 & Three column & 0.0325 & 0.0205 & \textbf{0.0087} \\
Source & Academic literature & 0.0230 & 0.0082 & \textbf{0.0051} \\
 & Book & 0.0185 & 0.0180 & \textbf{0.0124} \\
 & PPT2PDF & 0.0293 & 0.0144 & \textbf{0.0113} \\
 & Colorful textbook & 0.0515 & 0.0515 & \textbf{0.0196} \\
 & Exam paper & 0.0687 & 0.0525 & \textbf{0.0184} \\
 & Magazine & 0.0092 & 0.0057 & \textbf{0.0042} \\
 & Newspaper & 0.0204 & 0.0123 & \textbf{0.0091} \\
 & Note & 0.0810 & 0.0429 & \textbf{0.0274} \\
 & Research report & 0.0094 & 0.0026 & \textbf{0.0024} \\
\bottomrule
\end{tabular}%
}
\end{table}

The text-block drill-down shows the same pattern: DocPO outperforms both DeepSeek-OCR and the non-annealed baseline on every reported subset. The gains are particularly large on noisy or structurally difficult pages. Compared with the non-annealed variant, DocPO reduces edit distance by 0.0349 on fuzzy scans, 0.0447 on watermark pages, 0.0319 on colorful textbooks, and 0.0341 on exam papers. The corresponding reductions relative to DeepSeek-OCR are 0.0805, 0.0612, 0.0319, and 0.0503. These results match the table drill-down in the main paper and support a unified conclusion across all three element types: DocPO is most advantageous when layout complexity, visual corruption, or language mixing makes fine-grained discrimination especially important.

\subsection{Hyperparameter Sensitivity}
We assess the sensitivity of Step-Aware Annealing to the maximum curvature adjustment range $\Delta_{\gamma}$ using the overall score. We set $\gamma_{\text{init}}=1$ and vary $\Delta_{\gamma}$ while keeping other settings fixed.

\begin{table}[htbp]
\centering
\caption{Sensitivity to $\Delta_{\gamma}$ on element-level parsing (Overall $\uparrow$).}
\label{tab:delta_gamma_sensitivity}
\small
\begin{tabular}{lcc}
\toprule
\textbf{$\Delta_{\gamma}$} & \textbf{OmniDocBench} & \textbf{DocElemHard} \\
\midrule
0 (w/o SAA) & 94.42 & 90.53 \\
4  & 94.87 & 91.65 \\
8 (DocPO)   & \textbf{95.49} & \textbf{93.76} \\
\bottomrule
\end{tabular}
\end{table}

We next study the reward mixing weights $\alpha$ and $\beta$ in the formula reward $R_{\text{formula}} = v_{\text{syn}} \cdot (\alpha \cdot r_{\text{sem}} + \beta \cdot r_{\text{struct}})$. Table~\ref{tab:alpha_beta_sensitivity} reports formula CDM on OmniDocBench under different $(\alpha, \beta)$ combinations without SAA. The two boundary cases ($\alpha{=}0$: NED only; $\alpha{=}1$: rubric only) both underperform mixed settings, confirming that the semantic and structural signals are complementary. The chosen setting, $\alpha{=}0.8$ and $\beta{=}0.2$, achieves the highest score.

\begin{table}[htbp]
\centering
\caption{Sensitivity to $(\alpha, \beta)$ in the formula reward on OmniDocBench (CDM $\uparrow$, without SAA).}
\label{tab:alpha_beta_sensitivity}
\small
\begin{tabular}{ccc}
\toprule
\textbf{$\alpha$} & \textbf{$\beta$} & \textbf{OmniDocBench} \\
\midrule
0   & 1.0 & 92.86 \\
0.5 & 0.5 & 92.76 \\
0.8 & 0.2 & \textbf{93.93} \\
0.9 & 0.1 & 93.58 \\
1.0 & 0   & 93.01 \\
\bottomrule
\end{tabular}
\end{table}

The base time constant $\tau_{\text{base}}$ governs the characteristic time scale of the exponential schedule. We default to roughly half of the total training steps, so that $\gamma$ completes most of its growth in the first half of training and gradually saturates thereafter. DDC then modulates this pace via $\tau_{\text{adaptive}} = \tau_{\text{base}} / (1 + d_{\text{DDC}})$.

\subsection{Inline Formulas in Text Blocks}
In end-to-end document parsing, text blocks may include inline mathematical expressions. The OmniDocBench protocol evaluates text blocks with string-level normalized edit distance (NED); we therefore use NED as the corresponding reward.

\subsection{Data Filtering Details}
\label{sec:appendix_filtering_details}
Before RL training, we perform a pre-filtering pass with 8 rollouts per sample to identify uninformative samples. Specifically, we remove samples that achieve perfect scores in all 8 rollouts because they offer no room for improvement. We also remove samples that receive zero reward in all 8 rollouts; their zero variance indicates consistent failure and little useful learning signal. This lightweight screening step reduces unnecessary computation and focuses RL optimization on samples where the model can meaningfully improve.

\textbf{Evaluation protocol and checkpoint selection.} No final benchmark test set is used for early stopping or model selection. The stabilization criterion in the main paper refers to a held-out validation split, composed of hard cases and separated from the training pool before RL. It covers mixed-script or noisy text, span-heavy tables, and long or multi-line formulas. OmniDocBench and DocElemHard are used only for final reporting.

\textbf{Leakage prevention.} We remove overlaps and near-duplicates between the training and validation pools and the final benchmarks. Image patches are screened with perceptual hashing; candidate textual matches are retrieved with SimHash over normalized parsed text and then verified with NED. This two-stage procedure filters both visually duplicated patches and textually near-identical content.

\subsection{Analysis of Step-Aware Annealing}
\label{sec:appendix_theory_saa}
We provide a formal analysis of how increasing $\gamma$ over training sharpens the reward signal among near-correct samples. We first establish two propositions about the power-law transform $f_{\gamma}(M)=M^{\gamma}$, then discuss how they connect to the GRPO advantage and the annealing schedule.

\textbf{Setup.} Let $M\in(0,1]$ denote a base reward (normalized to $[0,1]$, following the main paper). The shaped reward is $f_{\gamma}(M) = M^{\gamma}$ with $\gamma \ge 1$. In GRPO, the advantage for sample $i$ within a rollout group $\mathcal{G}$ is computed as $A_i = (r_i - \mu_{\mathcal{G}}) / \sigma_{\mathcal{G}}$, where $r_i = M_i^{\gamma}$ is the shaped reward.

\begin{proposition}[Relative Margin Amplification]
\label{prop:relative_margin}
For $m\in(0,1)$ and small $\delta>0$ with $m+\delta\le 1$, define the relative gap $\rho_{\gamma}(m,\delta) \triangleq \frac{f_{\gamma}(m+\delta) - f_{\gamma}(m)}{f_{\gamma}(m)}$. Then $\rho_{\gamma}(m,\delta)$ is monotonically increasing in $\gamma$ for all $m\in(0,1)$.
\end{proposition}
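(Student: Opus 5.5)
The plan is to reduce the relative gap to a closed form and then differentiate in $\gamma$. Since $f_{\gamma}(m)=m^{\gamma}>0$ for $m\in(0,1)$, we can divide through:
\[
\rho_{\gamma}(m,\delta)=\frac{(m+\delta)^{\gamma}-m^{\gamma}}{m^{\gamma}}=\Bigl(1+\frac{\delta}{m}\Bigr)^{\gamma}-1 .
\]
This matches the expression already quoted in the main text. We write $q\triangleq 1+\delta/m$ and observe that $q>1$ strictly, because $\delta>0$ and $m>0$. The constraint $m+\delta\le 1$ plays no role in monotonicity; it only keeps $m+\delta$ inside the domain of the base reward.

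Next we view $\rho_{\gamma}=q^{\gamma}-1=\exp(\gamma\log q)-1$ as a function of the real variable $\gamma$. Its derivative is
\[
\frac{\partial\rho_{\gamma}}{\partial\gamma}=q^{\gamma}\log q .
\]
This is strictly positive, since $q^{\gamma}>0$ and $\log q>0$. Hence $\rho_{\gamma}$ is strictly increasing in $\gamma$, on $\gamma\ge 1$ and indeed on all of $\mathbb{R}$. This holds for every $m\in(0,1)$ and every admissible $\delta>0$.

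As a derivative-free alternative, for $\gamma_2>\gamma_1$ one can write $q^{\gamma_2}=q^{\gamma_1}q^{\gamma_2-\gamma_1}$ and note that $q^{\gamma_2-\gamma_1}>1$. The word ``small'' on $\delta$ is therefore unnecessary for the statement. It matters only for the heuristic reading $\rho_{\gamma}\approx\gamma\delta/m$, which shows the gap grows roughly linearly in $\gamma$.

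There is no real obstacle here. The only step needing care is ensuring the division by $f_{\gamma}(m)$ is legitimate and that $q>1$ strictly, which is why $m=0$ is excluded. Beyond that, it is a one-line monotonicity argument for an exponential with base exceeding one.
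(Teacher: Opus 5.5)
Your proposal is correct and follows the paper's proof: rewrite $\rho_{\gamma}$ as $(1+\delta/m)^{\gamma}-1$, note the base (the paper calls it $\alpha$, you call it $q$) is strictly greater than $1$, and conclude from $\partial_{\gamma}\rho_{\gamma}=q^{\gamma}\log q>0$. Your derivative-free variant is a valid alternative, and you are right that neither the smallness of $\delta$ nor $m+\delta\le 1$ is needed for monotonicity.
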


\begin{proof}
By direct computation,
\begin{equation}
\rho_{\gamma}(m,\delta) = \frac{(m+\delta)^{\gamma} - m^{\gamma}}{m^{\gamma}} = \left(1+\frac{\delta}{m}\right)^{\gamma} - 1.
\end{equation}
Let $\alpha = 1+\delta/m > 1$. Then $\rho_{\gamma} = \alpha^{\gamma}-1$ and $\frac{\partial \rho_{\gamma}}{\partial \gamma} = \alpha^{\gamma}\log\alpha > 0$ since $\alpha>1$. \qedhere
\end{proof}

This shows that the \textit{relative} reward difference between two nearby candidates always grows with $\gamma$. For small $\delta$, the \textit{absolute} gap is approximately $\gamma m^{\gamma-1}\delta$ and can shrink for moderate $m$ because $m^{\gamma-1}\to 0$ when $m<1$. The relative gap, however, quantifies the increasing pre-normalization separation, while the monotonic transform preserves the original reward ordering. The effect is strongest when $\delta/m$ is large, that is, when the gap is already meaningful relative to the baseline score.

\begin{proposition}[Exponential Concentration of Reward Mass]
\label{prop:concentration}
For two candidates with rewards $0<M_2<M_1\le 1$ in the same rollout group, the reward ratio satisfies
\begin{equation}
\frac{f_{\gamma}(M_1)}{f_{\gamma}(M_2)} = \left(\frac{M_1}{M_2}\right)^{\gamma},
\end{equation}
which grows exponentially in $\gamma$. Consequently, the normalized reward mass $q_i = f_{\gamma}(M_i)/\sum_j f_{\gamma}(M_j)$ over a rollout group converges to a point mass on $\arg\max_i M_i$ as $\gamma\to\infty$.
\end{proposition}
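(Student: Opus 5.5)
The ratio identity is immediate from the definition $f_\gamma(M)=M^\gamma$: since both rewards are positive, $f_\gamma(M_1)/f_\gamma(M_2)=M_1^\gamma/M_2^\gamma=(M_1/M_2)^\gamma$. To make "grows exponentially in $\gamma$" precise, I will set $\lambda=\log(M_1/M_2)>0$, which is well defined because $0<M_2<M_1$. The ratio then equals $e^{\lambda\gamma}$, a strictly increasing exponential function of $\gamma$ that diverges as $\gamma\to\infty$. This mirrors the computation in Proposition~\ref{prop:relative_margin}, except that here the base $M_1/M_2>1$ is fixed and no smallness of the gap is needed.

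For the concentration claim, I will take a rollout group with rewards $M_1,\dots,M_G\in(0,1]$. Let $M^\star=\max_j M_j$ and $S=\{i: M_i=M^\star\}$. Dividing the numerator and denominator of $q_i$ by $(M^\star)^\gamma$ gives
\[
q_i=\frac{(M_i/M^\star)^{\gamma}}{|S|+\sum_{j\notin S}(M_j/M^\star)^{\gamma}}.
\]
Each ratio $M_j/M^\star$ with $j\notin S$ lies in $[0,1)$, so its $\gamma$-th power tends to $0$. Since the group is finite, the denominator tends to $|S|$. Hence $q_i\to 1/|S|$ for $i\in S$ and $q_i\to 0$ otherwise. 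When the maximizer is unique this limit is a point mass on $\arg\max_i M_i$; with ties it is the uniform distribution over the argmax set. I will state this tie caveat explicitly, since the proposition as written implicitly assumes a unique maximizer. I can also give a rate: $1-\sum_{i\in S}q_i\le (G-|S|)\,\eta^\gamma/|S|$, where $\eta=\max_{j\notin S}M_j/M^\star<1$. This shows the convergence is itself exponential in $\gamma$, which ties back to the ratio identity.

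The only real subtlety is zero rewards. The setup assumes $M\in(0,1]$, but the base rewards can be exactly $0$, for example under the syntax gate. A sample with $M_j=0$ just contributes $0$ to every sum, so the argument is unchanged as long as $M^\star>0$. If all rewards are zero, $q$ is undefined. I will exclude that case by assumption, consistent with the filtering of all-zero groups described in Section~\ref{sec:appendix_filtering_details}.

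Finally, I will note the Gibbs-form reading as a remark: $q_i=\exp(\gamma\log M_i)/\sum_j\exp(\gamma\log M_j)$ is a softmax over $\log M_i$ at temperature $1/\gamma$. This matches the interpretation in Section~\ref{sec:step_aware_annealing} and makes the zero-temperature limit the standard argmax limit. No step here is a serious obstacle; the main care needed is in handling ties and zeros correctly.
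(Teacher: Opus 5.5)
Your proposal is correct and follows essentially the paper's route: the paper also reads the ratio identity off directly and proves concentration by noting $q_j/q_{i^{*}}=(M_j/M_{i^{*}})^{\gamma}\to 0$ for every $j\neq i^{*}$, which is exactly your division by $(M^\star)^{\gamma}$. Your additions are sound refinements rather than a different approach: the tie caveat (the paper's argument silently assumes a unique maximizer, and with ties the limit is indeed uniform over the argmax set), the handling of zero rewards, and the explicit rate $(G-|S|)\,\eta^{\gamma}/|S|$.
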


\begin{proof}
The ratio identity follows directly because $M_1/M_2>1$. For the concentration result, note that for any $j\neq i^{*}$ where $i^{*}=\arg\max_i M_i$, we have $q_j / q_{i^{*}} = (M_j/M_{i^{*}})^{\gamma} \to 0$ as $\gamma\to\infty$, so $q_{i^{*}} \to 1$. \qedhere
\end{proof}

\textbf{Connection to GRPO Advantage.} The propositions above describe the pre-normalization reward profile. GRPO subsequently centers and scales rewards within each group through $A_i = (r_i - \mu_{\mathcal{G}})/\sigma_{\mathcal{G}}$, so they do not by themselves guarantee monotonic growth in the normalized advantage. They show instead that nonlinear shaping changes the relative spacing of raw rewards and concentrates their mass on better candidates. Empirically, this sharper profile can produce more decisive within-group weighting during policy updates.

\textbf{Why Anneal (Coarse-to-Fine Curriculum).} When $\gamma=1$, $f_{\gamma}(M)=M$ is the identity and all reward differences contribute proportionally, yielding stable early training with broad gradient signal. As $\gamma$ increases, Propositions~\ref{prop:relative_margin}--\ref{prop:concentration} imply that the optimization signal progressively concentrates on distinguishing near-correct candidates. This implements a coarse-to-fine curriculum: early steps move rewards upward broadly; later steps refine among high-quality outputs.

\textbf{Role of the Dynamic Dispersion Controller (DDC).} DDC defines a task-wise dispersion score $d_{\text{DDC}} = \frac{\sigma}{\mu + \epsilon}$ over a recent reward window, instantiated with the rolling coefficient of variation. Concretely, the adaptive time constant $\tau_{\text{adaptive}}=\tau/(1+d_{\text{DDC}})$ modulates the annealing pace based on this internal control signal. When $d_{\text{DDC}}$ is large (dispersed rewards, indicating the model still produces highly variable outputs), $\tau_{\text{adaptive}}$ decreases and $\gamma$ grows faster, applying stronger sharpening earlier. When $d_{\text{DDC}}$ is small (rewards already concentrated), the schedule stays closer to the default pace, avoiding premature over-sharpening that could reduce gradient diversity.

\section{Dataset Statistics and Comparisons}
\label{sec:dataset_details}

We provide a statistical breakdown of the \textit{DocElemHard} dataset. DocElemHard is constructed from diverse document sources---academic papers, books, and web-captured pages---to stress-test element-level parsing on genuinely difficult instances. We collect raw element patches across categories, perform $k$-means clustering on visual features to ensure diversity, and then apply category-specific difficulty filters described below. As shown in Table~\ref{tab:dataset_stats}, the final dataset covers three core element categories totaling 9{,}578 instances.

To prevent benchmark leakage, we remove overlaps and near-duplicates between the training/validation data and the test benchmarks using perceptual hashing for images and SimHash retrieval followed by normalized edit-distance verification for text. We will release DocElemHard and the evaluation protocol at \url{https://github.com/mohhao/DocPO} upon publication to support reproducibility.

\begin{table}[ht]
\centering
\caption{Overall distribution of document elements in the proposed \textit{DocElemHard} dataset ($N=9,578$).}
\label{tab:dataset_stats}
\small
\begin{tabular}{lr}
\toprule
\textbf{Element Category} & \textbf{Instance Count} \\
\midrule
Text Block & 8,100 \\
Formula    & 480 \\
Table      & 998 \\
\bottomrule
\end{tabular}
\end{table}

\begin{table}[htbp]
\centering
\caption{Structural and linguistic comparison between \textit{DocElemHard}$_{\text{table}}$ and \textit{OmniDocBench}$_{\text{table}}$.}
\label{tab:dataset_comparison}
\small
\setlength{\tabcolsep}{5pt}
\begin{tabular}{@{}lcc@{}}
\toprule
 & \textbf{DocElemHard} & \textbf{OmniDocBench} \\
\midrule
Total Samples            & 998           & 512           \\
\addlinespace[3pt]
\multicolumn{3}{@{}l}{\textit{Language}} \\
\quad English            & 998 (100\%)   & 196 (38.3\%)  \\
\quad Chinese (Simp.)    & --            & 295 (57.6\%)  \\
\quad Mixed              & --            & 21\;\:(4.1\%) \\
\addlinespace[3pt]
\multicolumn{3}{@{}l}{\textit{Equation}} \\
\quad w/ Embedded Eq.    & \textbf{609 (61.0\%)} & 88\;\:(17.2\%) \\
\quad Text-only          & 389 (39.0\%)  & 424 (82.8\%)  \\
\addlinespace[3pt]
\multicolumn{3}{@{}l}{\textit{Complexity}} \\
\quad w/ Spanning Cells  & \textbf{714 (71.5\%)} & 159 (31.1\%) \\
\quad Regular Layout     & 284 (28.5\%)  & 353 (68.9\%)  \\
\bottomrule
\end{tabular}
\end{table}

\textbf{Table subset} (998 samples): We retain only tables that are structurally challenging: each table must contain more than 80 cells, include at least one spanning cell (\texttt{rowspan} or \texttt{colspan}), or exhibit degraded visual quality (scan artifacts, partial occlusion). Table~\ref{tab:dataset_comparison} highlights the resulting structural complexity compared to OmniDocBench---notably, 61.0\% of our tables contain embedded equations and 71.5\% have spanning cells, far exceeding the corresponding OmniDocBench proportions.

\textbf{Formula subset} (480 samples): We accumulate failure cases from iterative model evaluation, retaining formulas that consistently cause errors across multiple model versions. The resulting subset is dominated by structurally complex expressions: 20.2\% contain multi-line environments (\texttt{aligned}, \texttt{cases}, \texttt{array}), 14.6\% feature nested structures (e.g., fractions within summations), and 24.4\% exceed 200 characters in LaTeX source length. Many originate from scanned documents with low contrast or print artifacts.

\textbf{Text block subset} (8,100 samples): Similarly sourced from accumulated bad cases across diverse documents (academic papers, books, and web pages). Content-wise, 31.4\% contain bold or italic markers requiring style-aware recognition, 14.9\% include inline math or LaTeX fragments, 11.7\% are multi-line passages, and 68.5\% contain digits or numerical expressions. These factors combine to create a benchmark where models must handle mixed formatting and noisy visual conditions simultaneously.

\begingroup
\renewcommand{\refname}{Supplementary References}

\endgroup

\end{document}